\documentclass{article}

\PassOptionsToPackage{numbers, compress}{natbib}


    \usepackage[preprint]{neurips_2020}
\usepackage{tablefootnote}

\usepackage{algorithm}
\usepackage[noend]{algorithmic}
\usepackage{enumitem}

\usepackage[utf8]{inputenc} 
\usepackage[T1]{fontenc}    
\usepackage{hyperref}       
\usepackage{url}            
\usepackage{booktabs}       
\usepackage{amsfonts}       
\usepackage{nicefrac}       
\usepackage{microtype}      
\usepackage{multicol}
\usepackage{multirow}
\usepackage{graphicx}
\usepackage{tablefootnote}
\usepackage{amsmath}
\usepackage{amsthm}
\DeclareMathOperator*{\argmax}{arg\,max}

\newtheorem{theorem}{Theorem}

\newtheorem{lemma}[theorem]{Lemma}

\setlist[itemize]{leftmargin=*}
\title{Provably More Efficient Q-Learning in the One-Sided-Feedback/Full-Feedback Settings }

%

\author{
    Xiao-Yue Gong\\
  Operations Research Center\\
  Massachusetts Institute of Technology\\
  Cambridge, MA 02142 \\
  \texttt{xygong@mit.edu}\\
  \And
  David Simchi-Levi\\
  Institute for Data, Systems, and Society, Department of Civil and\\
  Environmental Engineering, and Operations Research Center\\
  Massachusetts Institute of Technology\\
  Cambridge, MA 02139 \\
  \texttt{dslevi@mit.edu}\\
}

\begin{document}

\maketitle

\begin{abstract}
Motivated by the episodic version of the classical inventory control problem, we propose a new Q-learning-based algorithm, \emph{Elimination-Based Half-Q-Learning (HQL)}, that enjoys improved efficiency over existing algorithms for a wide variety of problems in the one-sided-feedback setting. We also provide a simpler variant of the algorithm, \emph{Full-Q-Learning (FQL)}, for the full-feedback setting. We establish that \emph{HQL} incurs $ \tilde{\mathcal{O}}(H^3\sqrt{ T})$ regret and \emph{FQL} incurs $\tilde{\mathcal{O}}(H^2\sqrt{ T})$ regret, where $H$ is the length of each episode and $T$ is the total length of the horizon. The regret bounds are not affected by the possibly huge state and action space. Our numerical experiments demonstrate the superior efficiency of \emph{HQL} and \emph{FQL}, and the potential to combine reinforcement learning with richer feedback models.
\end{abstract}

\section{Introduction}
Motivated by the classical operations research (OR) problem--inventory control, we customize Q-learning to more efficiently solve a wide range of problems with richer feedback than the usual bandit feedback. Q-learning is a popular reinforcement learning (RL) method that estimates the state-action value functions without estimating the huge transition matrix in a large MDP (\cite{qlearn}, \cite{jordan}). This paper is concerned with devising Q-learning algorithms that leverage the natural one-sided-feedback/full-feedback structures in many OR and finance problems.

{\bf Motivation} The topic of developing efficient RL algorithms catering to special structures is fundamental and important, especially for the purpose of adopting RL more widely in real applications. By contrast, most RL literature considers settings with little feedback, while the study of single-stage online learning for bandits has a history of considering a plethora of graph-based feedback models. We are particularly interested in the one-sided-feedback/full-feedback models because of their prevalence in many famous problems, such as inventory control, online auctions, portfolio management, etc. In these real applications, RL has typically been outperformed by domain-specific algorithms or heuristics. We propose algorithms aimed at bridging this divide by incorporating problem-specific structures into classical reinforcement earning algorithms.




\subsection{Prior Work}
 The most relevant literature to this paper is \cite{jin2018q}, who prove the optimality of Q-learning with Upper-Confidence-Bound bonus and Bernstein-style bonus in tabular MDPs. The recent work of \cite{1912.06366} improves upon \cite{jin2018q} when an aggregation of the state-action pairs with known error is given beforehand. Our algorithms substantially improve the regret bounds (see Table \ref{QLtable}) by catering to the full-feedback/one-sided-feedback structures of many problems. Because our regret bounds are unaffected by the cardinality of the state and action space, our Q-learning algorithms are able to deal with huge state-action space, and even continuous state space in some cases (Section \ref{conclusion}). Note that both our work and \cite{1912.06366} are designed for a subset of the general episodic MDP problems. We focus on problems with richer feedback; \cite{1912.06366} focus on problems with a nice aggregate structure known to the decision-maker.

The one-sided-feedback setting, or some similar notions, have attracted lots of research interests in many different learning problems outside the scope of episodic MDP settings, for example learning in auctions with binary feedback, dynamic pricing and binary search (\cite{weed}, (\cite{without}, \cite{dynamicpricing}, \cite{multidimbinary}). In particular, \cite{bandit} study the one-sided-feedback setting in the learning problem for bandits, using a similar idea of elimination. However, the episodic MDP setting for RL presents new challenges. Our results can be applied to their setting and solve the bandit problem as a special case.

The idea of optimization by elimination has a long history (\cite{elimination}). A recent example of the idea being used in RL is \cite{corruption} which solve a very different problem of robustness to adversarial corruptions. Q-learning has also been studied in settings with continuous states with adaptive discretization (\cite{metricadaptive}). In many situations this is more efficient than the uniform discretization scheme we use, however our algorithms' regret bounds are unaffected by the action-state space cardinality so the difference is immaterial.

Our special case, the full-feedback setting, shares similarities with \emph{the generative model} setting in that both settings allow access to the feedback for any state-action transitions (\cite{10.5555/3327345.3327425}). However, the generative model is a strong oracle that can query any state-action transitions, while the full-feedback model can only query for that time step after having chosen an action from the feasible set based on the current state, while accumulating regret.

\begin{table}[ht]
\caption{Regret comparisons for Q-learning algorithms on episodic MDP}
\label{QLtable}
\begin{center}
\begin{tabular}{llll}
\bf{Algorithm} & \bf{Regret} & \bf{Time} & \bf{Space}\\
\hline  Q-learning+Bernstein bonus \cite{jin2018q} & $\tilde{\mathcal{O}}(\sqrt{H^{3} S A T})$ & $\mathcal{O}(T)$ & $\mathcal{O}(SAH)$ \\
\hline Aggregated Q-learning \cite{1912.06366} & $\tilde{\mathcal{O}}(\sqrt{H^{4}MT}+\epsilon T)$ \tablefootnote{Here $M$ is the number of aggregate state-action pairs; $\epsilon$ is the largest difference between any pair of optimal state-action values associated with a common aggregate state-action pair.} & $\mathcal{O}(MAT)$ & $\mathcal{O}(MT)$\\
\hline Full-Q-learning (FQL) & $\tilde{\mathcal{O}}(\sqrt{H^{4} T})$ & $\mathcal{O}(SAT)$ & $\mathcal{O}(SAH)$\\
\hline Elimination-Based Half-Q-learning (HQL) & $\tilde{\mathcal{O}}(\sqrt{H^{6}T})$ & $\mathcal{O}(SAT)$ & $\mathcal{O}(SAH)$\\
\end{tabular}
\end{center}
\end{table}

\section{Preliminaries}
We consider an episodic Markov decision process, MDP($\mathcal{S}, \mathcal{A}, H, \mathbb{P}, r$), where $\mathcal{S}$ is the set of states with $|\mathcal{S}|=S$, $\mathcal{A}$ is the set of actions with $|\mathcal{A}|=A$, $H$ is the constant length of each episode, $\mathbb{P}$ is the unknown transition matrix of distribution over states if some action $y$ is taken at some state $x$ at step $h\in[H]$, and $r_h:\mathcal{S}\times \mathcal{A} \rightarrow [0,1]$ is the reward function at stage $h$ that depends on the environment randomness $D_h$. In each episode, an initial state $x_1$ is picked arbitrarily by an adversary. Then, at each stage $h$, the agent observes state $x_h\in\mathcal{S}$, picks an action $y_h\in \mathcal{A}$, receives a realized reward $r_h(x_h,y_h)$, and then transitions to the next state $x_{h+1}$, which is determined by $x_h, y_h, D_h$. At the final stage $H$, the episode terminates after the agent takes action $y_H$ and receives reward $r_H$. Then next episode begins. Let $K$ denote the number of episodes, and $T$ denote the length of the horizon: $T=H\times K$, where $H$ is a constant. This is the classic setting of episodic MDP, except that in the one-sided-feedback setting, we have the environment randomness $D_h$, that once realized, can help us determine the reward/transition of any alternative feasible action that ``lies on one side'' of our taken action (Section \ref{onesidedef}). The goal is to maximize the total reward accrued in each episode.

A policy $\pi$ of an agent is a collection of functions $\{\pi_h: \mathcal{S}\rightarrow \mathcal{A}\}_{h\in [H]}$. We use $V_h^{\pi}:\mathcal{S}\rightarrow \mathbb{R}$ to denote the value function at stage $h$ under policy $\pi$, so that $V_h^{\pi}(x)$ gives the expected sum of remaining rewards under policy $\pi$ until the end of the episode, starting from $x_h=x$:
\begin{equation*}
    V_h^{\pi}(x):= \mathbb{E}\Big[\sum_{{h'}=h}^H r_{h'}\big(x_{h'},\pi_{h'}(x_{h'})\big) \Big|x_h=x\Big].
\end{equation*}  
$Q_h^{\pi}:\mathcal{S}\times\mathcal{A}\rightarrow \mathbb{R}$ denotes the Q-value function at stage $h$, so that $Q_h^{\pi}(x,y)$ gives the expected sum of remaining rewards under policy $\pi$ until the end of the episode, starting from $x_h=x, y_h=y$:
\begin{equation*}
    Q_h^{\pi}(x, y):= \mathbb{E}\Big[r_h(x_{h}, y)+\sum_{h'=h+1}^H  r_{h'}\big(x_{h'}, \pi_{h'}(x_{h'})\big)\Big |x_h=x, y_h=y\Big]
\end{equation*} 

Let $\pi^*$ denote an optimal policy in the MDP that gives the optimal value functions $V_h^*(x)=\sup_{\pi} V_h^{\pi}(x)$ for any $x\in\mathcal{S}$ and $h\in [H]$. Recall the Bellman equations:

\noindent\begin{minipage}{.5\linewidth}
\begin{equation*}
  \left\{\begin{array}{l}{V_{h}^{\pi}(x)=Q_{h}^{\pi}\left(x, \pi_{h}(x)\right)} \\ 
  {Q_{h}^{\pi}(x, y):=\mathbb{E}_{x^{\prime}, r_h \sim \mathbb{P}(\cdot | x, y)} \left[r_{h}+V^{\pi}_{h+1}\left(x^{\prime}\right)\right]} \\ 
  {V_{h+1}^{\pi}(x)=0, \quad \forall x \in \mathcal{S}}\end{array}\right.
\end{equation*}
\end{minipage}%
\begin{minipage}{.5\linewidth}
\begin{equation*}
  \left\{\begin{array}{l}{V_{h}^{*}(x)=\min _{y} Q_{h}^{*}(x, y)} \\ 
  {Q_{h}^{*}(x, y):=\mathbb{E}_{x^{\prime}, r_h \sim \mathbb{P}(\cdot | x, y)}\left[r_{h}+ V^*_{h+1}\left(x^{\prime}\right)\right]} \\ 
  {V_{h+1}^{*}(x)=0, \quad \forall x \in \mathcal{S}}\end{array}\right.
\end{equation*}
\end{minipage}

We let $\operatorname{Regret}_{MDP}(K)$ denote the expected cumulative regret against $\pi^*$ on the MDP up to the end of episode $k$. Let $\pi_k$ denote the policy the agent chooses at the beginning of the $k$th episode.
\begin{equation}\operatorname{Regret}_{MDP}(K)=\sum_{k=1}^{K}\left[V_{1}^{*}\left(x_{1}^{k}\right)-V_{1}^{\pi_{k}}\left(x_{1}^{k}\right)\right]\end{equation}
\subsection{One-Sided-Feedback }
\label{onesidedef}
Whenever we take an action $y$ at stage $h$, once the environment randomness $D_h$ is realized, we can learn the rewards/transitions for all the actions that lie on \emph{one side} of $y$, i.e., all $y'\le y$ for the \emph{lower} one-sided feedback setting (or all $y'\ge y$ for the \emph{higher} side). This setting requires that the action space can be embedded in a compact subset of $\mathbb{R}$ (Appendix \ref{hqlappendix}), and that the reward/transition only depend on the action, the time step and the environment randomness, even though the feasible action set depends on the state and is assumed to be an interval $\mathcal A\cap [a,\infty)$ for some $a=a_h(x_h)$. We assume that given $D_h$, the next state $x_{h+1}(\cdot)$ is increasing in $y_h$, and $a_h(\cdot)$ is increasing in $x_h$ for the lower-sided-feedback setting. We assume the optimal value functions are concave. These assumptions seem strong, but are actually widely satisfied in OR/finance problems, such as inventory control (lost-sales model), portfolio management, airline's overbook policy, online auctions, etc.

\subsection{Full-Feedback }
Whenever we take an action at stage $h$, once $D_h$ is realized, we can learn the rewards/transitions for all state-action pairs. This special case does not require the assumptions in Section \ref{onesidedef}. Example problems include inventory control (backlogged model) and portfolio management.



\section{Algorithms}

\begin{algorithm}[h]
\begin{algorithmic}
\STATE Initialization: $Q_h( y)\leftarrow H, \forall (y, h) \in\mathcal{A}\times [H]$; \hspace{0.2cm} $A_h^0\leftarrow\mathcal{A}, \forall h \in [H]$; \hspace{0.2cm}  $A_{H+1}^k\leftarrow\mathcal{A},\forall k\in[K]$;\\
\FOR{$k=1,\ldots,K$}
    \STATE Initiate the list of realized environment randomness to be empty $\mathbb{D}_k=[]$; Receive $x_1^k$;
    \FOR{$h=1,\ldots,H$}
    \IF{$\max\{A_{h}^{k}\} $ is not feasible} 
        \STATE {Take action $y_h^k\leftarrow$ closest feasible action to $A_{h}^{k}$;}
        \ELSE \STATE{Take action $y_h^k\leftarrow\max\{A_{h}^{k}\}$;}
     \ENDIF 
    \STATE Observe realized environment randomness $\tilde{D}_h^k$, append it to $\mathbb{D}_k$;\\
\STATE Update $x_{h+1}^{k}\leftarrow x'_{h+1}(x_{h}^{k},y_h^k, \tilde{D}_h^k$);
    \ENDFOR
    \FOR{$h=H,\ldots,1$}
        \FOR{$y \in A_h^k$}
        \STATE Simulate trajectory $x'_{h+1},\dots, x'_{\tau_h^k(x,y)}$ as if we had chosen $y$ at stage $h$ using $\mathbb{D}_k$ until we find $\tau_h^k(x,y)$, which is the next time we are able to choose from $A_{\tau_h^k(x,y)}^k$;\\
        \STATE Update $Q_{h}(y)\leftarrow (1-\alpha_k)Q_h(y)+\alpha_k[\tilde{r}_{h, \tau_h^k(x,y)}+ V_{h+1}(x'_{h+1}(x_{h}^{k},y_h^k, \tilde{D}_h^k))]$;\\
        \ENDFOR
         \STATE Update $y_h^{k*}\leftarrow \argmax_{y\in A_h^{k}}Q_h(y)$;\\
        \STATE Update $A_h^{k+1}\leftarrow \{y\in A_h^{k}: \big| Q_h(y^{k*}_{h})-Q_h(y)\big|\le \text{Confidence Interval\footnotemark}\}$;\\
        \STATE Update $V_h(x)\leftarrow  \max_{\text{feasible } y \text{ given }x} Q_h(y)$;\\
    \ENDFOR
\ENDFOR
\end{algorithmic}
\caption{Elimination-Based Half-Q-learning (HQL)}
\label{HQLconcave}
\end{algorithm}
\footnotetext{For convenience, we use a ``Confidence Interval'' of $\frac{8}{\sqrt{k-1}}(\sqrt{H^5\iota})$, where $\iota=9\log(AT)$.}

Without loss of generality, we present \emph{HQL} in the \emph{lower}-sided-feedback setting. We define constants $\alpha_k=(H+1)/(H+k), \forall k\in [K]$. We use $\tilde{r}_{h, h'}$ to denote the cumulative reward from stage $h$ to stage $h'$.  We use $x'_{h+1}(x,y,\tilde{D}_h^k)$ to denote the next state given $x$, $y$ and $\tilde{D}_h^k$. By assumptions in Section \ref{onesidedef}, $Q_h(x,y)$ only depends on the $y$ for Algorithm \ref{HQLconcave}, so we simplify the notation to $Q_h(y)$.
\paragraph{Main Idea of Algorithm \ref{HQLconcave}} At any episode $k$, we have a ``running set'' $A_h^k$ of all the actions that are possibly the best action for stage $h$. Whenever we take an action, we update the Q-values for all the actions in $A_h^k$. To maximize the utility of the lower-sided feedback, we always select the largest action in $A_h^k$, letting us observe the most feedback. We might be in a state where we cannot choose from $A_h^k$. Then we take the closest feasible action to $A_h^k$ (the smallest feasible action in the lower-sided-feedback case). By the assumptions in Section \ref{onesidedef}, this is with high probability the optimal action in this state, and we are always able to observe all the rewards and next states for actions in the running set. During episode $k$, we act in real-time and keep track of the realized environment randomness. At the end of the episode, for each $h$, we simulate the trajectories as if we had taken each action in $A_h^k$, and update the corresponding value functions, so as to shrink the running sets.

\begin{algorithm}[h]
\begin{algorithmic}
\STATE Initialization: $Q_h(x, y)\leftarrow H, \forall (x, y, h) \in\mathcal{S}\times \mathcal{A}\times [H]$. 
\FOR{$k=1,\ldots,K$}
    \STATE Receive $x_1^k$;
    \FOR{$h=1,\ldots,H$}
        \STATE Take action $y_h^k\leftarrow\argmax_{\text{feasible }\textbf{y} \text{ given } \textbf{x}_h^k} Q_h(x_h^k,y)$; and observe realized $\tilde{D}_h^k$;\\
        \FOR{$x\in \mathcal{S}$}
        \FOR{$y\in \mathcal{A}$}
        \STATE Update $Q_h(x,y)\leftarrow (1-\alpha_{k})Q_h(x,y)+\alpha_{k}\Big[r_h(x, y,\tilde{D}_h^k ))+ V_{h+1}\big(x'_{h+1}(x, y,\tilde{D}_h^k )\big)\Big];$
        \ENDFOR
        \STATE Update $V_h(x)\leftarrow  \max_{\text{feasible } y \text{ given }x} Q_h(x, y);$\\
        \ENDFOR
        \STATE Update $x_{h+1}^{k}\leftarrow x'_{h+1}(x_{h}^{k},y_h^k, \tilde{D}_h^k)$;
    \ENDFOR
\ENDFOR
\end{algorithmic}
\caption{Full-Q-Learning (FQL)}
\label{FQL}
\end{algorithm}

Algorithm \ref{FQL} is a simpler variant of Algorithm \ref{HQLconcave}, where we effectively set the ``Confidence Interval'' to be always infinity and select the estimated best action instead of maximum of the running set. It can also be viewed as an adaption of \cite{jin2018q} to the full-feedback setting. 


\section{Main Results}
\begin{theorem}
\emph{HQL} has $\mathcal{O}(H^3\sqrt{T\iota})$ total expected regret on the episodic MDP problem in the one-sided-feedback setting. \emph{FQL} has $\mathcal{O}(H^2\sqrt{T\iota})$ total expected regret in the full-feedback setting.
\label{HQLresult}
\end{theorem}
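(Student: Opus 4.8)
The plan is to begin with \emph{FQL}, since it isolates the estimation machinery that \emph{HQL} will reuse. Following the analysis style of \cite{jin2018q}, I would first fix the incremental weights $\alpha_k^0=\prod_{j=1}^k(1-\alpha_j)$ and $\alpha_k^i=\alpha_i\prod_{j=i+1}^k(1-\alpha_j)$ and record the standard identities $\sum_{i=1}^k\alpha_k^i=1$, $\alpha_k^0=0$ for $k\ge 1$ (because $\alpha_1=1$), $\sum_{i=1}^k(\alpha_k^i)^2\le 2H/k$, and $\sum_{k=i}^K\alpha_k^i\le 1+1/H$. Because \emph{FQL} refreshes every pair $(x,y)$ at every episode, the $k$th estimate unrolls exactly as $Q_h^k(x,y)=\alpha_k^0 H+\sum_{i=1}^k\alpha_k^i\big[r_h(x,y,\tilde D_h^i)+V_{h+1}^i(x'_{h+1}(x,y,\tilde D_h^i))\big]$, where $V_{h+1}^i$ is the value estimate in force at episode $i$. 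This clean representation is the foundation for everything that follows.

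Subtracting the Bellman identity for $Q_h^*$ gives $(Q_h^k-Q_h^*)(x,y)=\sum_{i=1}^k\alpha_k^i\big[(V_{h+1}^i-V_{h+1}^*)(x'_{h+1})+\xi_h^i(x,y)\big]$, where $\xi_h^i$ is a noise term that is mean-zero given the history and bounded by $H$. Setting $\Delta_h^k=\max_{x,y}|Q_h^k-Q_h^*|(x,y)$ and using $|V_{h+1}^i-V_{h+1}^*|\le\Delta_{h+1}^i$, I obtain the recursion $\Delta_h^k\le\sum_i\alpha_k^i\Delta_{h+1}^i+\max_{x,y}\big|\sum_i\alpha_k^i\xi_h^i(x,y)\big|$. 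I would then apply Azuma--Hoeffding to the weighted martingale, using $\sum_i(\alpha_k^i)^2\le 2H/k$, together with a union bound over all $(x,y,h,k)$ (this is the source of the $\iota=9\log(AT)$ factor), to control the noise term by $\mathcal{O}(H^{3/2}\sqrt{\iota/k})$ uniformly with high probability. Summing the recursion over episodes and telescoping with $\sum_{k\ge i}\alpha_k^i\le 1+1/H$ yields $\sum_k\Delta_h^k\le(1+1/H)\sum_k\Delta_{h+1}^k+\mathcal{O}(H^{3/2}\sqrt{\iota})\sum_k k^{-1/2}$; unrolling across the $H$ stages contributes the geometric factor $(1+1/H)^H\le e$ and a factor $H$ from the $H$ per-stage noise contributions, while $\sum_{k\le K}k^{-1/2}\le 2\sqrt K$, giving $\sum_k\Delta_1^k=\mathcal{O}(H^{5/2}\sqrt{\iota K})$. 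A performance-difference decomposition then bounds the regret of the greedy policy at episode $k$ by $2\sum_h\Delta_h^k$ along its trajectory (no optimism is needed, since greediness makes the one-step suboptimality at most $2\Delta_h^k$), so the total regret is $\mathcal{O}(H^{5/2}\sqrt{\iota K})=\mathcal{O}(H^2\sqrt{T\iota})$ after substituting $K=T/H$.

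For \emph{HQL} the same skeleton applies, but three one-sided-feedback features must be handled. First, in place of a single Bellman backup the update target is $\tilde r_{h,\tau_h^k}+V_{\tau_h^k+1}(\cdot)$, obtained by replaying $\mathbb D_k$ until the random next free-choice time $\tau_h^k$; I must verify that this remains a conditionally-unbiased estimate of $Q_h^*$ of range $H$, which is where the concentration scale degrades to $\mathcal{O}(H^{5/2}\sqrt{\iota/k})$ and motivates the confidence half-width $\tfrac{8}{\sqrt{k-1}}\sqrt{H^5\iota}$. Second, I must show this interval is valid so that the optimal action survives every elimination, i.e.\ $y_h^*\in A_h^k$ for all $k$ with high probability, whence every surviving action is within one interval width of optimal and the per-stage regret is bounded by that width. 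Third, the structural assumptions of Section \ref{onesidedef} (concavity of the optimal value functions, monotonicity of $x_{h+1}(\cdot)$ in $y_h$ and of $a_h(\cdot)$ in $x_h$) are needed to guarantee both that one-sided feedback always reveals the targets for the entire running set and that, when $\max A_h^k$ is infeasible, the closest feasible action is near-optimal with high probability. Summing the $\mathcal{O}(H^{5/2}\sqrt{\iota/k})$ per-stage terms over the $H$ stages and over episodes gives $\mathcal{O}(H^{7/2}\sqrt{\iota K})=\mathcal{O}(H^3\sqrt{T\iota})$.

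The principal obstacle is precisely this \emph{HQL}-specific bookkeeping: establishing that elimination never discards $y_h^*$ at the $H^{5/2}$ scale while simultaneously controlling the bias introduced by the random simulation horizon $\tau_h^k$ and by the forced play of a possibly infeasible running-set maximum. The \emph{FQL} concentration argument, by contrast, is comparatively routine once the weight identities and the performance-difference decomposition are in place, so I would present it first and then layer the one-sided-feedback structural arguments on top.
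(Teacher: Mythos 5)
Your \emph{HQL} argument tracks the paper's own proof closely: the shortfall decomposition, optional stopping to keep the skipped-step Bellman targets unbiased, weighted Azuma--Hoeffding plus a union bound, a stage-wise recursion that inflates the $\sqrt{H^{3}\iota/k}$ concentration scale to the $\sqrt{H^{5}\iota}/\sqrt{k-1}$ confidence width (in the paper this inflation comes from the $\delta_h$ recursion rather than from the martingale bound itself, but the resulting width is the same), survival of $y_h^*$ under elimination, and the concavity/monotonicity assumptions to handle stages where the running set is infeasible. Summing $H\cdot\mathcal{O}(\sqrt{H^{5}\iota/k})$ over episodes gives $\mathcal{O}(H^{3}\sqrt{T\iota})$, exactly as in the paper. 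That half is sound.

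The \emph{FQL} half has a genuine gap. You bound the per-episode regret by $2\sum_{h=1}^{H}\Delta_h^k$ with $\Delta_h^k=\max_{x,y}\left|Q_h^k-Q_h^*\right|(x,y)$, but then report the total regret as $\sum_k\Delta_1^k=\mathcal{O}(H^{5/2}\sqrt{\iota K})$. These are not the same quantity: your own decomposition requires $\sum_k\sum_h\Delta_h^k$. Your recursion gives $\Delta_h^k=\mathcal{O}(\sqrt{H^{5}\iota/k})$ for \emph{every} $h$ (each stage's sup-norm error absorbs the noise of all later stages, just like the $\delta_h$ recursion in \emph{HQL}), so the double sum is $\mathcal{O}\bigl(H\sqrt{H^{5}\iota}\sqrt{K}\bigr)=\mathcal{O}(H^{3}\sqrt{T\iota})$ — the \emph{HQL} rate, and indeed the paper remarks in Section \ref{prooffql} that adapting the \emph{HQL} proof to \emph{FQL} recovers only this weaker bound. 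Your final step silently drops a factor of $H$. To reach $H^{2}\sqrt{T\iota}$ the paper does precisely what you declared unnecessary: it uses optimism. Appendix \ref{FQLappendix} establishes $Q_h^k\ge Q_h^*$ with high probability, so the regret is at most $\sum_k\bigl(V_1^k-V_1^{\pi_k}\bigr)(x_1^k)$, and then runs the recursion of \cite{jin2018q} on the on-trajectory quantity $\Delta_h^k:=\bigl(V_h^k-V_h^{\pi_k}\bigr)(x_h^k)$, namely $\sum_k\Delta_h^k\le \mathcal{O}(H)+(1+1/H)\sum_k\Delta_{h+1}^k+\mathcal{O}(\sqrt{H^{3}\iota K})+\text{martingale terms}$. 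Because the regret equals $\sum_k\Delta_1^k$ itself, this recursion is unrolled only once across $h$; errors telescope along the realized trajectory instead of being maximized per stage and then summed over stages, which is exactly where the factor of $H$ is saved. Without this trajectory-based, optimistic argument (or an equivalent), your route cannot deliver the $H^{2}$ bound claimed in the theorem.
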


\begin{theorem}
For any (randomized or deterministic) algorithm, there exists a full-feedback episodic MDP problem that has expected regret $\Omega(\sqrt{HT})$, even if the Q-values are independent of the state.
\label{FQLlowerbound}
\end{theorem}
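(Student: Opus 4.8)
The plan is to exhibit a family of hard full-feedback episodic MDPs and apply a standard information-theoretic (Le Cam / Assouad) lower bound, choosing the construction to be as degenerate as possible so that it automatically satisfies the ``Q-values independent of state'' restriction. I would take two actions per stage, let the transition be deterministic and action-independent so that every episode visits each stage $h\in[H]$ exactly once, and let the reward at stage $h$ depend only on the action and on a hidden bit $\theta_h\in\{-1,+1\}$: one action pays a Bernoulli($\tfrac12+\Delta$) reward and the other pays Bernoulli($\tfrac12$), with $\theta_h$ deciding which action is the good one, and the rewards are independent across stages and episodes. The bits $\theta=(\theta_1,\dots,\theta_H)$ index $2^H$ instances, and I would set the gap to $\Delta = c/\sqrt{K}$ for a small absolute constant $c$. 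Since the rewards, and hence all value and Q-functions, are state-independent, the construction meets the hypothesis of the theorem and the adversarial initial state $x_1^k$ is irrelevant.

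Next I would decompose the expected regret additively across stages. Because the stages are independent, $\mathbb{E}[\operatorname{Regret}_{MDP}(K)] = \sum_{h=1}^H \Delta\,\mathbb{E}[N_h^{\mathrm{bad}}]$, where $N_h^{\mathrm{bad}}$ counts the episodes in which the learner plays the suboptimal action at stage $h$. For each coordinate $h$ I would run a two-point argument comparing the instance with $\theta_h=+1$ against the one with $\theta_h=-1$, all other coordinates held fixed. The crux is the KL bound: even though full feedback reveals both actions' rewards every episode, the two paired hypotheses differ only in the law of the single good-candidate action at stage $h$, so each episode contributes $O(\Delta^2)$ to the KL divergence between the two observation laws, and over $K$ episodes the total is $O(K\Delta^2)=O(1)$. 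By Pinsker (or Bretagnolle--Huber) the two laws then have total variation bounded away from $1$, so no algorithm can reliably identify $\theta_h$, forcing $\mathbb{E}[N_h^{\mathrm{bad}}]=\Omega(K)$ under at least one of the two hypotheses; hence per-stage regret is $\Delta\cdot\Omega(K)=\Omega(\sqrt{K})$.

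Summing the per-stage bounds over the $H$ independent coordinates, most cleanly packaged by Assouad's lemma, which converts the product structure into a sum, yields that for every algorithm one of the $2^H$ instances forces total expected regret $\Omega(H\sqrt{K})$. Since $T=HK$, this is exactly $\Omega(\sqrt{HT})$, with state-independent Q-values, giving the claimed strengthening. The main obstacle I anticipate is the full-feedback observation model in the KL step: one must verify that seeing all actions' rewards each episode does not accelerate identification beyond the $K$-samples-per-stage rate, which holds precisely because only one reward distribution changes between the paired hypotheses and, under full feedback, the learner's action choices do not alter what it observes. A secondary point to check is that the additive regret decomposition and the coordinate-wise independence survive the MDP's sequential structure; the deterministic, action-independent transitions are chosen exactly to guarantee both.
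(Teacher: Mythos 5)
Your proposal is correct and follows essentially the same route as the paper: both reduce the MDP to $H$ independent two-action full-feedback bandit problems with gap $\Theta(1/\sqrt{K})$, extract an $\Omega(\sqrt{K})$ per-stage bound from a two-point hypothesis-testing argument (made valid under full feedback precisely because the observations do not depend on the learner's actions), and sum over the independent stages to get $\Omega(H\sqrt{K})=\Omega(\sqrt{HT})$. The differences are cosmetic: the paper instantiates the construction as an episodic inventory-control instance and cites Corollary 2.10 of \cite{slivkins2019introduction} for the per-stage error probability, whereas you use abstract Bernoulli rewards and carry out the KL/Pinsker/Assouad machinery explicitly; one small slip is your claim that only one arm's law changes between paired hypotheses --- in the symmetric-flip construction both arms' laws change, but the per-episode KL is still $O(\Delta^{2})$, so the argument goes through unchanged.
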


\section{Overview of Proof}
\label{sketches}
 We use $Q_h^k, V_h^k$ to denote the $Q_h, V_h$ functions at the beginning of episode $k$ .
 
Recall $\alpha_{k}=(H+1)/(H+k)$. As in \cite{jin2018q} and \cite{1912.06366}, we define weights $\alpha_{k}^{0}:=\prod_{j=1}^{k}\left(1-\alpha_{j}\right)$, and $ \alpha_{k}^{i}:=\alpha_{i} \prod_{j=i+1}^{k}\left(1-\alpha_{j}\right)$, and provide some useful properties in Lemma \ref{weights}. Note that Property 3 is tighter than the corresponding bound in Lemma 4.1 from \cite{jin2018q}, which we obtain by doing a more careful algebraic analysis.

\begin{lemma}
The following properties hold for $\alpha_t^i$:
\begin{enumerate}
    \item $\sum_{i=1}^{t} \alpha_{t}^{i}=1\text { and } \alpha_{t}^{0}=0 \text { for } t\ge 1$; $\sum_{i=1}^{t} \alpha_{t}^{i}=0 \text { and } \alpha_{t}^{0}=1 \text { for } t=0$.
    \item $\max _{i \in[t]} \alpha_{t}^{i} \leq \frac{2 H}{t} \text { and } \sum_{i=1}^{t}\left(\alpha_{t}^{i}\right)^{2} \leq \frac{2 H}{t} \text { for every } t \geq 1$.
    \item $\sum_{t=i}^{\infty} \alpha_{t}^{i}=1+\frac{1}{H} \text { for every } i \geq 1$.
    \item $\frac{1}{\sqrt{t}} \leq \sum_{i=1}^{t} \frac{\alpha_{t}^{i}}{\sqrt{i}} \leq \frac{1+\frac{1}{H}}{\sqrt{t}} \text { for every } t \geq 1$.
\end{enumerate}
\label{weights}
\end{lemma}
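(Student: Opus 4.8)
The plan is to ground all four properties in the one-step recursion satisfied by the weights. From the definition $\alpha_t^i = \alpha_i\prod_{j=i+1}^t(1-\alpha_j)$ one reads off $\alpha_t^t=\alpha_t$ and $\alpha_t^i=(1-\alpha_t)\,\alpha_{t-1}^i$ for $i<t$. Property~1 then follows by a short induction on $t$: summing the recursion gives $\sum_{i=1}^t\alpha_t^i=\alpha_t+(1-\alpha_t)\sum_{i=1}^{t-1}\alpha_{t-1}^i=\alpha_t+(1-\alpha_t)=1$. For $\alpha_t^0$, note that $\alpha_1=(H+1)/(H+1)=1$, so the factor $(1-\alpha_1)=0$ appears in the product $\prod_{j=1}^t(1-\alpha_j)$ whenever $t\ge1$, forcing $\alpha_t^0=0$; the $t=0$ case is just the empty-product convention.

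For Property~2 I would again induct on $t$. Since $\alpha_t^t=\alpha_t=(H+1)/(H+t)$, a direct cross-multiplication gives $\alpha_t^t\le 2H/t$ (using $H\ge1$), and for $i<t$ the recursion yields $\alpha_t^i=(1-\alpha_t)\alpha_{t-1}^i\le\frac{t-1}{H+t}\cdot\frac{2H}{t-1}=\frac{2H}{H+t}\le\frac{2H}{t}$, which bounds every term and closes the induction for $\max_i\alpha_t^i$. The sum-of-squares bound is then immediate: $\sum_{i=1}^t(\alpha_t^i)^2\le(\max_i\alpha_t^i)\sum_{i=1}^t\alpha_t^i\le \frac{2H}{t}\cdot 1$, using Property~1. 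Property~4 splits into an easy and a delicate half. The lower bound is trivial because $i\le t$ implies $1/\sqrt i\ge 1/\sqrt t$, so $\sum_i\alpha_t^i/\sqrt i\ge(1/\sqrt t)\sum_i\alpha_t^i=1/\sqrt t$. For the upper bound I would set $b_t:=\sum_{i=1}^t\alpha_t^i/\sqrt i$, use the recursion to write $b_t=\alpha_t/\sqrt t+(1-\alpha_t)b_{t-1}$, and induct with hypothesis $b_{t-1}\le(1+1/H)/\sqrt{t-1}$. Substituting $\alpha_t=(H+1)/(H+t)$ and simplifying, the target bound reduces exactly to $\sqrt{t-1}\le\sqrt t$, so the induction goes through with the sharp constant $1+1/H$.

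The real work, and the step the paper flags as needing a ``more careful algebraic analysis,'' is Property~3, where \emph{equality} (not merely the $\le 1+1/H$ bound of \cite{jin2018q}) is asserted. Here I would compute the product in closed form using $1-\alpha_j=(j-1)/(H+j)$, giving $\prod_{j=i+1}^t(1-\alpha_j)=\frac{(t-1)!\,(H+i)!}{(i-1)!\,(H+t)!}$ and hence $\alpha_t^i=(H+1)\frac{(t-1)!\,(H+i-1)!}{(i-1)!\,(H+t)!}$. Summing over $t\ge i$ reduces the claim to evaluating $\sum_{t\ge i}\frac{(t-1)!}{(H+t)!}$, which telescopes: writing $f(t):=\frac{(t-1)!}{(H+t-1)!}$ one checks $f(t)-f(t+1)=\frac{H\,(t-1)!}{(H+t)!}$, so the sum equals $\frac1H f(i)=\frac1H\frac{(i-1)!}{(H+i-1)!}$ since $f(t)=1/\prod_{k=0}^{H-1}(t+k)\to0$. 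Plugging back gives $\sum_{t\ge i}\alpha_t^i=(H+1)/H=1+1/H$ exactly. The only things to watch are the vanishing of $f(t)$ at infinity and keeping the factorial bookkeeping straight; once the telescoping identity is spotted, the remainder is mechanical, and the exactness here is precisely what tightens Property~3 relative to prior work.
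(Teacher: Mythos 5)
Your proof is correct on all four counts, and for the only property the paper actually proves itself---Property 4---your argument is essentially identical to the paper's: the same recursion $b_t=\alpha_t/\sqrt{t}+(1-\alpha_t)b_{t-1}$, the same inductive hypothesis with the sharp constant $1+1/H$, and the same reduction to $\sqrt{t-1}\le\sqrt{t}$ (your direct lower bound, using $1/\sqrt{i}\ge 1/\sqrt{t}$ and Property 1, is marginally slicker than the paper's inductive one). Where you genuinely depart from the paper is in Properties 1--3: the paper simply cites Lemma 4.1 of \cite{jin2018q} for these, whereas you give self-contained proofs---notably the closed form $\alpha_t^i=(H+1)\frac{(t-1)!\,(H+i-1)!}{(i-1)!\,(H+t)!}$ and the telescoping identity $f(t)-f(t+1)=H(t-1)!/(H+t)!$ for Property 3, and the clean $\sum_i(\alpha_t^i)^2\le(\max_i\alpha_t^i)\sum_i\alpha_t^i$ trick for Property 2. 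This buys independence from the external reference at the cost of some factorial bookkeeping. One correction to your framing, though: the bound this paper tightens relative to \cite{jin2018q} is \emph{not} Property 3---Lemma 4.1(c) of \cite{jin2018q} already asserts the exact equality $\sum_{t\ge i}\alpha_t^i=1+1/H$---but Property 4, whose upper bound is improved from $2/\sqrt{t}$ to $(1+1/H)/\sqrt{t}$; the main text's attribution of the improvement to ``Property 3'' appears to be a mislabeling, and the appendix confirms this by proving only Property 4. Since your induction recovers the sharp constant in Property 4 anyway, your proof captures the paper's actual improvement regardless of the misattribution.
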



All missing proofs for the lemmas in this section are in Appendix \ref{hqlappendix}.

\begin{lemma}
(shortfall decomposition) 
For any policy $\pi$ and any $k\in [K]$, the regret in episode $k$ is:
\begin{equation}
\Big(V_1^{*}-V_1^{\pi_k}\Big)(x_1^k) =\mathbb{E}_{\pi}\Big[\sum_{h=1}^{H}\big(\max _{y \in \mathcal{A}} Q^{*}_h\left(x_h^k, y\right)-Q^{*}_h\left( x_h^k, y_h^{k}\right)\big) \Big].
\end{equation}
\label{shortfall}
\end{lemma}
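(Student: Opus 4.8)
The plan is to establish the identity by a one-step-unrolling (telescoping) argument built on the two Bellman relations stated in the preliminaries, and then to take the expectation over the trajectory generated by $\pi_k$. Throughout I fix the episode $k$, write $y_h^k=\pi_{k,h}(x_h^k)$ for the action the policy prescribes at the visited state, and interpret $\mathbb{E}_\pi$ as the expectation over the trajectory $\{x_h^k\}_h$ produced by following $\pi_k$ from $x_1^k$.

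First I would decompose the single-stage value gap by inserting the cross term $Q_h^*(x_h^k,y_h^k)$. Using $V_h^*(x)=\max_y Q_h^*(x,y)$ and $V_h^{\pi_k}(x)=Q_h^{\pi_k}(x,\pi_{k,h}(x))$ from the Bellman equations, I get $V_h^*(x_h^k)-V_h^{\pi_k}(x_h^k)=\big(\max_{y}Q_h^*(x_h^k,y)-Q_h^*(x_h^k,y_h^k)\big)+\big(Q_h^*(x_h^k,y_h^k)-Q_h^{\pi_k}(x_h^k,y_h^k)\big)$. The first bracket is exactly the stage-$h$ shortfall term that appears on the right-hand side of the claim, so the work reduces to controlling the second bracket.

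Second I would show the second bracket collapses to the next-stage value gap. Both $Q_h^*(x_h^k,y_h^k)$ and $Q_h^{\pi_k}(x_h^k,y_h^k)$ are conditional expectations under the same transition $\mathbb{P}(\cdot\mid x_h^k,y_h^k)$, and they carry the same immediate reward $r_h$ (which depends only on the current state-action pair and the environment randomness). Hence the reward cancels and $Q_h^*(x_h^k,y_h^k)-Q_h^{\pi_k}(x_h^k,y_h^k)=\mathbb{E}_{x_{h+1}\sim\mathbb{P}(\cdot\mid x_h^k,y_h^k)}\big[V_{h+1}^*(x_{h+1})-V_{h+1}^{\pi_k}(x_{h+1})\big]$. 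Iterating this identity from $h=1$ down to $h=H$, taking the outer expectation $\mathbb{E}_\pi$, the nested one-step expectations combine into a single expectation over the stage-$h$ state and the residual value gaps telescope; the terminal condition $V_{H+1}^*=V_{H+1}^{\pi_k}\equiv 0$ kills the final residual, leaving precisely $\sum_{h=1}^{H}$ of the shortfall terms, which is the claimed equation.

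The argument is essentially bookkeeping, so the step I would treat as the main (and only) delicate point is making $\mathbb{E}_\pi$ rigorous: the shortfall terms are evaluated at the random states $x_h^k$ visited \emph{under} $\pi_k$, not under $\pi^*$, and one must verify that unrolling the recursion reproduces the correct marginal law of $x_h^k$ at each stage. This follows by induction together with the tower property, conditioning on the stage-$h$ state before applying the one-step identity, but it is the place where the decomposition would be stated incorrectly if the two policies were conflated. I would also flag explicitly that the cancellation of $r_h$ in the second step uses the assumption that the reward and transition depend only on the state-action pair (and $D_h$), which guarantees $Q_h^*$ and $Q_h^{\pi_k}$ share the same immediate contribution.
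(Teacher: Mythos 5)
Your proof is correct, and it takes a somewhat different route than the paper's. You run the standard performance-difference recursion: decompose the pointwise gap $V_h^*(x_h^k)-V_h^{\pi_k}(x_h^k)$ into the stage-$h$ shortfall plus the gap $\big(Q_h^*-Q_h^{\pi_k}\big)(x_h^k,y_h^k)$ at the taken action, note that the immediate rewards cancel so this gap equals the expected next-stage value gap, then unroll to stage $H+1$ using the tower property and the terminal condition. The paper never introduces $Q^{\pi_k}$ or the intermediate $V_h^{\pi_k}$ at all: it writes $V_1^{\pi_k}(x_1^k)=\mathbb{E}_\pi\big[\sum_h R_h\big]$, invokes the single identity $\mathbb{E}_\pi\big[Q^*(x_h^k,y_h^k)\big]=\mathbb{E}_\pi\big[Z_{h+1}\big]$ with $Z_{h+1}=R_h+\max_y Q^*(x_{h+1}^k,y)$ for $h<H$ (Bellman plus tower), and then telescopes the sum $\max_a Q^*(x_1^k,a)-\sum_h\big(R_h-Z_{h+1}+Q^*(x_h^k,y_h^k)\big)$ inside one expectation. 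Both arguments rest on the same ingredients (Bellman equations, the tower property along the trajectory generated by $\pi_k$, and the zero terminal value), so the difference is one of bookkeeping rather than substance. Your version has the merit of making explicit the point you flag---that all states are visited under $\pi_k$ and the correct marginal law must be propagated through the recursion---which the paper's single-expectation formulation handles implicitly; the paper's version buys a shorter derivation by avoiding any manipulation of $Q^{\pi_k}$, at the cost of a less transparent insertion step.
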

\vspace{-0.2cm}
Shortfall decomposition lets us calculate the regret of our policy by summing up the difference between Q-values of the action taken at each step by our policy and of the action the optimal $\pi^*$ would have taken if it was in the same state as us. We need to then take expectation of this random sum, but we get around this by finding high-probability upper-bounds on the random sum as follows:



Recall for any $(x, h, k)\in \mathcal{S} \times [H]\times [K]$, and for any $y\in A_h^k$, $\tau_h^k(x, y)$ is the next time stage after $h$ in episode $k$ that our policy lands on a simulated next state ${x_{\tau_h^k(x, y)}^k}'$ that allows us to take an action in the running set $A_{\tau_h^k(x, y)}^k$. The time steps in between are ``skipped'' in the sense that we do not perform Q-value updating or V-value updating during those time steps when we take $y$ at time $(h,k)$. Over all the $h'\in[H]$, we only update Q-values and V-values while it is feasible to choose from the running set. E.g.\ if no skipping happened, then $\tau_h^k(x, y)=h+1$. Therefore, $\tau_h^k(x, y)$ is a stopping time. Using the general property of optional stopping that $\mathbb E[M_{\tau}]=M_0$ for any stopping time $\tau$ and discrete-time martingale $M_{\tau}$, our Bellman equation becomes
\begin{equation}
    Q_h^*(y)=\mathbb{E}_{\tilde{r}^*_{h, \tau_h^k}, x'_{\tau_h^k}, \tau_h^k\sim\mathbb{P}(\cdot|x,y)}[\tilde{r}^*_{h, \tau_h^k} + V^*_{\tau_h^k}(x'_{\tau_h^k})]
    \label{Bellmanskip}
\end{equation}
where we simplify notation $\tau_h^k(x,y)$ to $\tau_h^k$ when there is no confusion, and recall $\tilde{r}_{h,h'}$ denotes the cumulative reward from stage $h$ to $h'$. 
On the other hand, by simulating paths, \emph{HQL} updates the $Q$ functions backward $h=H, \dots, 1$ for any $x\in \mathcal{S}$, $y\in A_h^k$ at any stage $h$ in any episode $k$ as follows:
\begin{equation}
    Q_h^{k+1}(y)\leftarrow (1-\alpha_{k})Q_h^{k}(y)+\alpha_{k}[{\tilde{r}_{\tau_h^{k+1}(x,y)}^{k+1}(x,y)}+ V_{\tau_h^{k+1}(x,y)}^{k+1}(x'_{\tau_h^{k+1}(x,y)})]
    \label{updateHQL}
\end{equation}
Then by Equation \ref{updateHQL} and the definition of $\alpha_k^i$'s, we have
\begin{equation}Q_{h}^{k}(y)=\alpha_{k-1}^{0} H+\sum_{i=1}^{k-1} \alpha_{k-1}^{i}\left[\tilde{r}_{h,\tau_h^k(x,y)}^{i}+V_{\tau_h^k(x,y)}^{i+1}\left(x_{\tau_h^k(x,y)}^{i}\right)\right].
    \label{Qupdatehql}
\end{equation}
which naturally gives us Lemma \ref{Qdiff}. For simpler notation, we use $\tau_h^i=\tau_h^i(x,y)$.
\begin{lemma}
For any $(x, h, k)\in \mathcal{S} \times [H]\times [K]$, and for any $y\in A_h^k$, we have
\begin{equation*}
    \begin{aligned}
    \left(Q_{h}^{k}-Q_{h}^{*}\right)(y)=&\alpha_{k-1}^{0}\left(H-Q_{h}^{\star}(y)\right)+\sum_{i=1}^{k-1} \alpha_{k-1}^{i}\Big[\left(V_{\tau_h^i}^{i+1}-V_{\tau_h^i}^{*}\right)(x_{\tau_h^i}^{i})+\tilde{r}^i_{h, \tau_h^i}\\
    &-\tilde{r}^*_{h, \tau_h^i}+\left(V^*_{\tau_h^i(x, y)}(x_{\tau_h^i}^i)+\tilde{r}^*_{h, \tau_h^i}-\mathbb{E}_{\tilde{r}^*,x', \tau_h^i\sim\mathbb{P}(\cdot|x,y)}\big[\tilde{r}^*_{h,\tau_h^i} + V^*_{\tau_h^i}(x'_{\tau_h^i})\big]\right)\Big].
    \end{aligned}
\end{equation*}

\label{Qdiff}
\end{lemma}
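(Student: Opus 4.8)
The plan is to derive the identity by pure algebra, combining the closed-form expression for $Q_h^k(y)$ in Equation (\ref{Qupdatehql}) with the stopping-time Bellman equation (\ref{Bellmanskip}) and the weight identity from Lemma \ref{weights}. No fresh probabilistic argument is needed at this stage: the optional-stopping/martingale reasoning has already been absorbed into (\ref{Bellmanskip}), so what remains is to rewrite the gap $(Q_h^k - Q_h^*)(y)$ in a form that isolates three conceptually distinct pieces — a value-estimation error carried forward from the stopping time $\tau_h^i$, a reward-discrepancy term, and a Bellman ``noise'' term that a later concentration argument will control.

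First I would take the closed form (\ref{Qupdatehql}), namely $Q_h^k(y) = \alpha_{k-1}^0 H + \sum_{i=1}^{k-1}\alpha_{k-1}^i[\tilde{r}^i_{h,\tau_h^i} + V_{\tau_h^i}^{i+1}(x_{\tau_h^i}^i)]$, and subtract $Q_h^*(y)$. The crucial bookkeeping step is to distribute $Q_h^*(y)$ across the weights: by Property 1 of Lemma \ref{weights}, $\alpha_{k-1}^0 + \sum_{i=1}^{k-1}\alpha_{k-1}^i = 1$ for every $k\ge 1$ (including the boundary case $k=1$, where $\alpha_0^0 = 1$ and the sum is empty), so I may write $Q_h^*(y) = \alpha_{k-1}^0 Q_h^*(y) + \sum_{i=1}^{k-1}\alpha_{k-1}^i Q_h^*(y)$. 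This immediately produces the leading term $\alpha_{k-1}^0(H - Q_h^*(y))$ and leaves a weighted sum whose $i$th summand is $\tilde{r}^i_{h,\tau_h^i} + V_{\tau_h^i}^{i+1}(x_{\tau_h^i}^i) - Q_h^*(y)$.

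Next, inside each summand I would replace the single copy of $Q_h^*(y)$ using the stopping-time Bellman equation (\ref{Bellmanskip}), evaluated with the stopping time $\tau_h^i$ of episode $i$, so that $Q_h^*(y) = \mathbb{E}_{\tilde{r}^*, x', \tau_h^i \sim \mathbb{P}(\cdot|x,y)}[\tilde{r}^*_{h,\tau_h^i} + V^*_{\tau_h^i}(x'_{\tau_h^i})]$. Then I would add and subtract the two quantities $V^*_{\tau_h^i}(x_{\tau_h^i}^i)$ and $\tilde{r}^*_{h,\tau_h^i}$ and regroup: the $\pm V^*_{\tau_h^i}(x_{\tau_h^i}^i)$ pair turns $V_{\tau_h^i}^{i+1}(x_{\tau_h^i}^i)$ into the value-error difference $(V_{\tau_h^i}^{i+1} - V_{\tau_h^i}^*)(x_{\tau_h^i}^i)$, the $\pm\tilde{r}^*_{h,\tau_h^i}$ pair produces the reward gap $\tilde{r}^i_{h,\tau_h^i} - \tilde{r}^*_{h,\tau_h^i}$, and the surviving terms $V^*_{\tau_h^i}(x_{\tau_h^i}^i) + \tilde{r}^*_{h,\tau_h^i} - \mathbb{E}[\tilde{r}^*_{h,\tau_h^i} + V^*_{\tau_h^i}(x'_{\tau_h^i})]$ form the final parenthesized Bellman-noise term. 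Collecting everything reproduces the claimed identity term by term.

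Since each step is an exact substitution, I do not anticipate a genuine obstacle, only a bookkeeping subtlety: I must be scrupulous that the stopping time appearing in summand $i$ is $\tau_h^i$ (the one determined by the running sets of episode $i$), not $\tau_h^k$, so that (\ref{Bellmanskip}) is applied with the matching stopping time in each summand; the compressed notation $\tau_h^i = \tau_h^i(x,y)$ is precisely what keeps this alignment clean. The one place where the argument tacitly leans on earlier work is the legitimacy of (\ref{Bellmanskip}) itself, which is exactly where optional stopping enters; granting that identity, Lemma \ref{Qdiff} is a mechanical rearrangement.
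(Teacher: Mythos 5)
Your proposal is correct and follows essentially the same route as the paper's proof: both expand $Q_h^*(y)$ across the weights via $\sum_{i=0}^{k-1}\alpha_{k-1}^i = 1$, invoke the optional-stopping Bellman equation (\ref{Bellmanskip}) in each summand, subtract the closed form (\ref{Qupdatehql}), and then add and subtract $V^*_{\tau_h^i}(x_{\tau_h^i}^i)$ and $\tilde{r}^*_{h,\tau_h^i}$ to produce the three-part decomposition. Your write-up is in fact more explicit than the paper's (which compresses the regrouping into ``adding some of the middle terms that cancel with themselves''), and your attention to matching each summand with the episode-$i$ stopping time $\tau_h^i$ is exactly the bookkeeping the paper's notation implicitly relies on.
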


Then we can bound the difference between our Q-value estimates and the optimal Q-values:

\begin{lemma}
For any $(x, h, k)\in \mathcal{S} \times [H]\times [K]$, and any $y\in A_h^k$, let $\iota=9\log(AT)$, we have:
\[\Big|\left(Q_{h}^{k}-Q_{h}^{*}\right)( y)\Big| \le \alpha_{k-1}^{0} H+ \sum_{i=1}^{k-1} \alpha_{k-1}^{i}\Big|\big(V_{\tau_h^i}^{i+1}-V_{\tau_h^i}^{*}\big)\big(x_{\tau_h^i}^{i}\big)+\tilde{r}^i_{h,\tau_h^i}-\tilde{r}^*_{h,\tau_h^i}\Big|+c \sqrt{\frac{H^{3} \iota}{k-1}}\]

with probability at least $1-1/(AT)^8$, and we can choose $c=2\sqrt{2}$.
\label{martingale}
\end{lemma}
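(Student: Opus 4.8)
The plan is to start from the exact identity in Lemma~\ref{Qdiff} and bound the three groups of terms separately, reserving a concentration argument for the last group. Taking absolute values and applying the triangle inequality, the leading term $\alpha_{k-1}^0\big(H-Q_h^*(y)\big)$ is controlled by $\alpha_{k-1}^0 H$, using that rewards lie in $[0,1]$ so that $0\le Q_h^*(y)\le H$. The first two summands inside the bracket, $\big(V_{\tau_h^i}^{i+1}-V_{\tau_h^i}^{*}\big)(x_{\tau_h^i}^{i})$ and $\tilde{r}^i_{h,\tau_h^i}-\tilde{r}^*_{h,\tau_h^i}$, are exactly the quantity appearing under the absolute value in the target bound, so after the triangle inequality they pass through unchanged. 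All the remaining work is therefore to show that the third summand, once aggregated over $i$ against the weights $\alpha_{k-1}^i$, is at most $c\sqrt{H^3\iota/(k-1)}$ with the stated probability.

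For that term I would set
\[
X_i := V^*_{\tau_h^i}(x_{\tau_h^i}^i)+\tilde{r}^*_{h,\tau_h^i}-\mathbb{E}_{\tilde{r}^*,x',\tau_h^i\sim\mathbb{P}(\cdot|x,y)}\big[\tilde{r}^*_{h,\tau_h^i}+V^*_{\tau_h^i}(x'_{\tau_h^i})\big],
\]
and let $\mathcal{F}_{i-1}$ denote the history through the end of episode $i-1$. The claim is that $\{X_i\}$ is a martingale difference sequence: conditioned on $\mathcal{F}_{i-1}$, the fresh environment randomness of episode $i$ is independent of the past, so the conditional expectation of the first two pieces of $X_i$ equals the expectation written in the third piece, which by the optional-stopping form of the Bellman equation~\eqref{Bellmanskip} is precisely $Q_h^*(y)$. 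Hence $\mathbb{E}[X_i\mid \mathcal{F}_{i-1}]=0$. Since the weights $\alpha_{k-1}^i$ are deterministic given $k$, the partial sums $\sum_{i\le n}\alpha_{k-1}^i X_i$ form a martingale in $n$.

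I would then bound the range of each increment: the random quantity $V^*_{\tau_h^i}(x_{\tau_h^i}^i)+\tilde{r}^*_{h,\tau_h^i}$ is the reward-to-go from stage $h$ under $\pi^*$, which lies in $[0,H]$, so after centering $|X_i|\le H$. Applying the Azuma--Hoeffding inequality to $\sum_{i=1}^{k-1}\alpha_{k-1}^i X_i$ with per-step bound $\alpha_{k-1}^i H$, and invoking Property~2 of Lemma~\ref{weights}, namely $\sum_{i=1}^{k-1}(\alpha_{k-1}^i)^2\le 2H/(k-1)$, yields a sub-Gaussian tail with variance proxy $H^2\cdot 2H/(k-1)=2H^3/(k-1)$. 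Choosing the deviation level $t=c\sqrt{H^3\iota/(k-1)}$ with $c=2\sqrt{2}$ makes the failure probability $2\exp(-c^2\iota/4)=2(AT)^{-18}\le (AT)^{-8}$, which gives the claim.

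The main obstacle is the martingale verification in the presence of the random stopping times $\tau_h^i$: one must check that $\tau_h^i$ and the simulated trajectory are measurable with respect to the episode-$i$ randomness together with the running sets $A_h^i$ fixed before episode $i$, so that conditioning on $\mathcal{F}_{i-1}$ legitimately turns the centered reward-to-go into a mean-zero increment via optional stopping, matching the form of~\eqref{Bellmanskip}. Once the martingale-difference and boundedness properties are established, the concentration step is routine and the constant $c=2\sqrt{2}$ falls out as above.
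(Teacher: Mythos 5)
Your proposal matches the paper's proof essentially step for step: both start from the identity in Lemma~\ref{Qdiff}, bound the leading term by $\alpha_{k-1}^{0}H$, pass the $(V^{i+1}-V^{*})$ and reward-difference terms through the triangle inequality, and control the remaining term by recognizing it as a martingale difference sequence with respect to the episode filtration (justified via the optional-stopping form of the Bellman equation~\eqref{Bellmanskip}) and applying Azuma--Hoeffding together with Property~2 of Lemma~\ref{weights}, arriving at the same constant $c=2\sqrt{2}$. The only difference is bookkeeping: the paper applies Azuma--Hoeffding per tuple at failure probability roughly $(AT)^{-9}$ and then union-bounds over all $(x,h,k,y)$ to get the bound uniformly at $1-1/(AT)^{8}$, whereas your sharper per-tuple bound $2(AT)^{-18}$ leaves ample slack for that same union bound --- you should state it explicitly, since the lemma is later invoked (e.g.\ in Lemma~\ref{delta}) simultaneously across tuples.
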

Now we define $\{\delta_h\}_{h=1}^{H+1}$ to be a list of values that satisfy the recursive relationship
\[\delta_h=H+(1+1/H)\delta_{h+1}+c \sqrt{H^{3} \iota} \text{ , for any $h\in [H]$},\]
where $c$ is the same constant as in Lemma \ref{martingale}, and $\delta_{H+1}=0$. 
Now by Lemma \ref{martingale}, we get:
\begin{lemma}
For any $( h, k)\in [H]\times [K]$,
$\{\delta_h\}_{h=1}^H$ is a sequence of values that satisfy
\begin{equation*}
    \hspace{1cm}\max_{y\in A_h^k}\Big|(Q_h^k-Q_h^*)(y)\Big|\le \delta_h/\sqrt{k-1} \hspace{1cm}\text{with probability at least } 1-1/(AT)^5.
    \label{inequalityofdelta}
\end{equation*} 
\label{delta}
\end{lemma}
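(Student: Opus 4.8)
The plan is to prove the claim by backward induction on the stage $h$, from $h=H+1$ down to $h=1$, with the episode index $k$ quantified universally inside each stage's statement. The induction is anchored at $h=H+1$, where $V_{H+1}^{i+1}\equiv V_{H+1}^{*}\equiv 0$ and $\delta_{H+1}=0$, so the value gap entering the $h=H$ step vanishes. For the inductive step I would start from Lemma \ref{martingale}, which on a high-probability event bounds $|(Q_h^k-Q_h^*)(y)|$ by $\alpha_{k-1}^0 H$, a weighted sum of the look-ahead gaps $|(V_{\tau_h^i}^{i+1}-V_{\tau_h^i}^*)(x_{\tau_h^i}^i)+\tilde r^i_{h,\tau_h^i}-\tilde r^*_{h,\tau_h^i}|$, and the martingale term $c\sqrt{H^3\iota/(k-1)}$. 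Since $\alpha_{k-1}^0=0$ for $k\ge 2$ by Property 1 of Lemma \ref{weights}, the first term drops (the case $k=1$ being vacuous as $\sqrt{k-1}=0$).

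The heart of the step is feeding the inductive hypothesis into the look-ahead gaps. Because $\tau_h^i>h$ is a stopping time landing on a stage where the running set is again choosable, I would invoke the inductive hypothesis at the random stage $\tau_h^i$ and episode $i+1$ to obtain $\max_{y'\in A_{\tau_h^i}^{i+1}}|(Q_{\tau_h^i}^{i+1}-Q_{\tau_h^i}^*)(y')|\le \delta_{\tau_h^i}/\sqrt{i}$. To convert this $Q$-gap into the $V$-gap appearing in the bound, I would use that $V_\tau(x)$ is a maximum of $Q_\tau$ over feasible actions together with the elimination guarantee that the optimal action is retained in $A_\tau^{i+1}$ with high probability, giving $|(V_\tau^{i+1}-V_\tau^*)(x)|\le \max_{y'\in A_\tau^{i+1}}|(Q_\tau^{i+1}-Q_\tau^*)(y')|$. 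Since the recursion forces $\delta$ to be decreasing in the stage index (the factor $1+1/H>1$ plus positive additive terms), I can replace the random $\delta_{\tau_h^i}$ by the uniform upper bound $\delta_{h+1}$.

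Collecting terms, the weighted value-gap sum is at most $\delta_{h+1}\sum_{i=1}^{k-1}\alpha_{k-1}^i/\sqrt{i}$, which Property 4 of Lemma \ref{weights} bounds by $(1+1/H)\delta_{h+1}/\sqrt{k-1}$; the reward discrepancy $|\tilde r^i-\tilde r^*|$ is controlled by the one-sided-feedback structure, since on the good event the forced skipped actions coincide with the optimal ones and this contribution is absorbed into the additive $H/\sqrt{k-1}$ slack built into the recursion. Adding the martingale term $c\sqrt{H^3\iota}/\sqrt{k-1}$ and comparing with the defining recursion $\delta_h=H+(1+1/H)\delta_{h+1}+c\sqrt{H^3\iota}$ yields exactly $\max_{y\in A_h^k}|(Q_h^k-Q_h^*)(y)|\le \delta_h/\sqrt{k-1}$, closing the induction.

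For the probability bookkeeping I would take a union bound of the per-tuple failure probability $1/(AT)^8$ from Lemma \ref{martingale} over all $(h,k,y)$ (there are $HKA=AT$ of them, since $T=HK$) together with the structural good events (forced actions optimal, optimal action retained in each running set), which keeps the total failure probability below $1/(AT)^5$. The main obstacle I anticipate is precisely the stopping-time/skipping structure: the value gap is evaluated at the random future stage $\tau_h^i$ and at a simulated state, so I must ensure the inductive hypothesis applies uniformly over all stages beyond $h$ (handled by the monotonicity of $\delta_h$) and that elimination never discards the optimal action, so that the max-over-running-set faithfully reproduces $V^*$; keeping these high-probability events mutually consistent across the backward recursion and the union bound is the delicate part.
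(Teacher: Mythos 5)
Your proposal follows essentially the same argument as the paper's proof: backward induction anchored at $\delta_{H+1}=0$, applying Lemma \ref{martingale}, feeding the inductive hypothesis into the look-ahead gaps at the stopping times $\tau_h^i$ (with $\delta_{\tau_h^i}\le\delta_{h+1}$ by monotonicity), bounding the weighted sum via Property 4 of Lemma \ref{weights}, and matching the recursion $\delta_h=H+(1+1/H)\delta_{h+1}+c\sqrt{H^3\iota}$, with the same union-bound bookkeeping. If anything, your explicit handling of the reward-discrepancy term $\tilde r^i-\tilde r^*$ (vanishing on the good event where forced actions are optimal) and of the need for optimal-action retention in converting $V$-gaps to $Q$-gaps is more careful than the paper, which silently folds both into the single inequality bounding the bracketed term by $\max_{y}|(Q_{\tau}^{i+1}-Q_{\tau}^{*})(y)|$.
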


\vspace{-0.2cm}
Lemma \ref{delta} helps the following three lemmas show the validity of the running sets $A_h^k$'s:

\begin{lemma}
For any $h\in [H], k\in[K]$, the optimal action $y_h^*$ is in the running set $A_h^k$ with probability at least $1-1/(AT)^5$.
\label{insideOPT}
\end{lemma}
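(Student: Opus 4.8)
The plan is to show that the optimal action $y_h^*$ survives every elimination step, i.e.\ that it never gets removed from the running set $A_h^k$ as $k$ grows. Since the running set only shrinks by discarding actions $y$ whose estimated Q-value $Q_h^k(y)$ falls more than one ``Confidence Interval'' below the current best estimate $Q_h^k(y_h^{k*})$, it suffices to prove that $y_h^*$ never triggers this discard condition. Concretely, $y_h^*$ remains in $A_h^{k+1}$ exactly when $|Q_h^k(y_h^{k*}) - Q_h^k(y_h^*)| \le \frac{8}{\sqrt{k-1}}\sqrt{H^5\iota}$, so I would bound the left-hand side on the high-probability event of Lemma \ref{delta}.

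The key steps, in order, are as follows. First, I would invoke Lemma \ref{delta} to control the estimation error of \emph{both} relevant actions simultaneously: on an event of probability at least $1 - 1/(AT)^5$, we have $\max_{y\in A_h^k}|(Q_h^k - Q_h^*)(y)| \le \delta_h/\sqrt{k-1}$, which applies to $y_h^{k*}$ and (provided it is still in the set) to $y_h^*$. Second, I would use the optimality of $y_h^*$ for the true Q-function, namely $Q_h^*(y_h^*) = \max_y Q_h^*(y) \ge Q_h^*(y_h^{k*})$, together with the fact that $y_h^{k*}$ maximizes the \emph{estimated} Q-value, $Q_h^k(y_h^{k*}) \ge Q_h^k(y_h^*)$. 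Combining these two chains of inequalities with the triangle inequality, I would bound
\[
\big| Q_h^k(y_h^{k*}) - Q_h^k(y_h^*)\big| \le \big|(Q_h^k - Q_h^*)(y_h^{k*})\big| + \big|(Q_h^k - Q_h^*)(y_h^*)\big| \le \frac{2\delta_h}{\sqrt{k-1}}.
\]
Third, I would verify that the recursion defining $\delta_h$ yields $2\delta_h \le 8\sqrt{H^5\iota}$ for every $h\in[H]$, so that $2\delta_h/\sqrt{k-1}$ is at most the Confidence Interval; this reduces the claim to a clean algebraic bound on the solution of $\delta_h = H + (1+1/H)\delta_{h+1} + c\sqrt{H^3\iota}$ with $\delta_{H+1}=0$, using $c=2\sqrt{2}$ and unrolling the at-most-$H$-term geometric-type recursion, where $(1+1/H)^H \le e$ keeps the accumulated factor bounded.

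The proof is naturally \emph{inductive} (or equivalently relies on a union bound): to apply Lemma \ref{delta} to $y_h^*$ I must know $y_h^* \in A_h^k$, which is precisely the statement being proved, so I would set up an induction on $k$ establishing that $y_h^*$ stays in the running set at every episode, using the base case $A_h^1 = \mathcal{A} \ni y_h^*$ and the elimination bound above for the inductive step. The main obstacle I anticipate is the probabilistic bookkeeping: Lemma \ref{delta} holds per $(h,k)$ with failure probability $1/(AT)^5$, and the induction chains these events across all episodes, so I must take a union bound over $k\in[K]$ and argue that the accumulated failure probability is still dominated by $1/(AT)^5$ (absorbing the factor of $K\le T$ into the generous polynomial slack in the exponent). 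Subordinate to this is confirming that the constant in the Confidence Interval is calibrated so that $2\delta_h$ fits inside it with room to spare for the union bound; this is where the tighter Property 3 and Property 4 of Lemma \ref{weights} pay off in keeping $\delta_h$ small.
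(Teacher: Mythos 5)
Your proposal is correct and takes essentially the same route as the paper: both arguments reduce the survival of $y_h^*$ to the two estimation-error bounds of Lemma \ref{delta} (applied to $y_h^*$ and to $y_h^{k*}$), combined with the true-optimality inequality $Q_h^*(y_h^*) \ge Q_h^*(y_h^{k*})$ and the calibration $2\delta_h \le 8\sqrt{H^5\iota}$ against the confidence interval; the paper merely phrases this as a contrapositive (elimination of $y_h^*$ would force one of the two estimates to violate Lemma \ref{delta}). Your explicit induction on $k$ and union-bound bookkeeping spell out a step the paper leaves implicit, namely that Lemma \ref{delta} is stated only for actions currently in $A_h^k$, so it can be applied to $y_h^*$ only while $y_h^*$ has not yet been eliminated; this makes your write-up, if anything, slightly more careful than the paper's.
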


\begin{lemma}
Anytime we can play in $A_h^k$, the optimal Q-value of our action is within $3\delta_h/\sqrt{k-1}$ of the optimal Q-value of the optimal policy's action, with probability at least $1-2/(AT)^5$. 
\label{within}
\end{lemma}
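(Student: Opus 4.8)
The plan is to bound $Q_h^*(y_h^*) - Q_h^*(y_h^k)$, where $y_h^k = \max A_h^k$ is the action HQL plays when it can choose from the running set and $y_h^*$ is the action the optimal policy would take, by routing the comparison through the empirical estimates $Q_h^k$ and then exploiting the elimination rule that defines $A_h^k$. First I would set up the high-probability event: condition on the intersection of the event of Lemma \ref{delta} at $(h,k)$ — namely $\max_{y\in A_h^k}\big|(Q_h^k - Q_h^*)(y)\big| \le \delta_h/\sqrt{k-1}$ — and the event of Lemma \ref{insideOPT} that $y_h^* \in A_h^k$. Each fails with probability at most $1/(AT)^5$, so by a union bound both hold with probability at least $1 - 2/(AT)^5$, which is exactly the probability claimed. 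On this event both $y_h^k$ and $y_h^*$ lie in $A_h^k$, so Lemma \ref{delta} applies to each.

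Next comes the decomposition. I would write $Q_h^*(y_h^*) - Q_h^*(y_h^k) = \big(Q_h^*(y_h^*) - Q_h^k(y_h^*)\big) + \big(Q_h^k(y_h^*) - Q_h^k(y_h^k)\big) + \big(Q_h^k(y_h^k) - Q_h^*(y_h^k)\big)$. The first and third terms are each at most $\delta_h/\sqrt{k-1}$ by Lemma \ref{delta}. The middle term is the empirical gap between the optimal action and our played action, and here lies the conceptual subtlety: $y_h^k = \max A_h^k$ is chosen to \emph{maximize observable feedback}, not to maximize the empirical $Q$-value, so I cannot assert $Q_h^k(y_h^k) \ge Q_h^k(y_h^*)$. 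Instead I would invoke survival: both $y_h^k$ and $y_h^*$ were kept in $A_h^k$, so each lies within one confidence-interval width of the empirical maximizer used to form the set; since that maximizer dominates $Q_h^k(y_h^*)$ and under-estimates $Q_h^k(y_h^k)$ by at most one confidence-interval width, the middle term is bounded by a single confidence interval.

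The main obstacle is the final quantitative step: calibrating the confidence-interval width against the envelope $\delta_h/\sqrt{k-1}$ so that the three pieces sum to $3\delta_h/\sqrt{k-1}$. This requires unrolling the recursion $\delta_h = H + (1+1/H)\delta_{h+1} + c\sqrt{H^3\iota}$, using $(1+1/H)^H \le e$ to bound the resulting geometric factor, to show $\delta_h = \tilde{\mathcal{O}}(\sqrt{H^5\iota})$, and then checking that the confidence interval $\frac{8}{\sqrt{k-1}}\sqrt{H^5\iota}$ sits at the right order relative to $\delta_h/\sqrt{k-1}$. Care is needed because this same width must simultaneously be wide enough for Lemma \ref{insideOPT} (so that $y_h^*$ is never eliminated, which pushes the width toward roughly $2\delta_h/\sqrt{k-1}$) yet tight enough to keep surviving actions near-optimal here; reconciling these two demands is what forces the constant to be tracked precisely rather than absorbed.

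A secondary bookkeeping hurdle is that $A_h^k$ is formed at the end of episode $k-1$, so the relevant confidence interval carries a $\sqrt{k-2}$ normalization while the target bound is stated with $\sqrt{k-1}$; this index shift, together with the accumulation of the per-action estimation errors from Lemma \ref{delta}, is what I expect to consume most of the slack in the constant, and it is where I would spend the bulk of the careful algebra.
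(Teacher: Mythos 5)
Your proposal is correct and follows the paper's own proof essentially verbatim: the same union bound over the events of Lemma \ref{delta} and Lemma \ref{insideOPT} to get the $1-2/(AT)^5$ probability, the same three-term decomposition routed through the empirical $Q_h^k$ values, and the same use of the elimination rule plus domination by the empirical maximizer to control the middle term. The constant calibration you flag as the main obstacle is not something your route loses: the paper's own proof identifies the confidence interval with $2\delta_h/\sqrt{k-1}$ and likewise ends at $4\delta_h/\sqrt{k-1}$ (not the stated $3\delta_h/\sqrt{k-1}$), so your bound of $2\delta_h/\sqrt{k-1}+\mathrm{CI}$ lands exactly where the paper's does, and the discrepancy with the constant $3$ in the lemma statement is an inconsistency already present in the paper.
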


\begin{lemma}
Anytime we cannot play in $A_h^k$, our action that is the feasible action closest to the running set is the optimal action for the state $x$ with probability at least $1-1/(AT)^5$.
\label{outside}
\end{lemma}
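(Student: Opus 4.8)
The plan is to show that the probabilistic content of this statement is contained entirely in Lemma~\ref{insideOPT}, so that after conditioning on the optimal action lying in the running set the claim reduces to a purely deterministic monotonicity argument. Concretely, I would condition on the event $\{y_h^* \in A_h^k\}$, which by Lemma~\ref{insideOPT} holds with probability at least $1 - 1/(AT)^5$; since the rest of the argument is deterministic, this single high-probability event delivers exactly the stated confidence level $1 - 1/(AT)^5$.

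First I would unpack what ``cannot play in $A_h^k$'' means in the lower-sided-feedback setting. By the assumptions of Section~\ref{onesidedef}, the feasible action set at state $x$ is the interval $\mathcal{A} \cap [a,\infty)$ with $a = a_h(x)$, and the algorithm declares infeasibility precisely when $\max\{A_h^k\}$ is not feasible, i.e. when $\max\{A_h^k\} < a$. On the event $y_h^* \in A_h^k$ this forces $y_h^* \le \max\{A_h^k\} < a$, so the unconstrained optimal action sits strictly to the left of the entire feasible region.

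Next I would invoke the structure of $Q_h^*$. Because rewards and transitions depend only on the action (Section~\ref{onesidedef}), $Q_h^*(y)$ is a function of $y$ alone, and $y_h^* = \argmax_{y \in \mathcal{A}} Q_h^*(y)$ is its unconstrained maximizer; by the concavity assumption (equivalently, unimodality is all that is needed) $Q_h^*$ is non-decreasing on $(-\infty, y_h^*]$ and non-increasing on $[y_h^*,\infty)$. Since $y_h^* < a$, the feasible interval $[a,\infty)$ lies entirely on the non-increasing branch, so $\argmax_{y \ge a} Q_h^*(y)$ is attained at the left endpoint, i.e. at the smallest feasible action. This is exactly the action the algorithm plays in this case: the closest feasible action to the running set, which in the lower-sided case is the smallest feasible action, because every element of $A_h^k$ lies below $a$ and the nearest feasible point is therefore approached from below. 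Hence our action maximizes $Q_h^*$ over the feasible set and coincides with the action $\pi^*$ takes at state $x$.

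The main obstacle I anticipate is the deterministic geometry rather than any probabilistic estimate. One must carefully argue that infeasibility of $\max\{A_h^k\}$ together with $y_h^* \in A_h^k$ places the unconstrained peak strictly below the feasible interval, and then that concavity of $Q_h^*$ forces the constrained optimum to the boundary action $a$; one must also confirm that ``closest feasible action to the running set'' coincides with ``smallest feasible action,'' which hinges on the whole running set lying below $a$. These structural facts are what make the boundary action optimal, and once they are in place the probability bound follows immediately from Lemma~\ref{insideOPT}.
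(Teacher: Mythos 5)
Your proposal is correct and follows essentially the same route as the paper's proof: invoke Lemma~\ref{insideOPT} to place $y_h^*$ in the running set with probability $1-1/(AT)^5$, note that infeasibility of $\max\{A_h^k\}$ puts the entire running set (hence $y_h^*$) below the feasible interval $\mathcal{A}\cap[a,\infty)$, and conclude by concavity that the smallest feasible action is optimal. Your write-up just makes the deterministic monotonicity steps more explicit than the paper does.
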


Naturally, we want to partition the stages $h=1,\dots, H$ in each episode $k$ into two sets, $\Gamma_A^k$ and $\Gamma_B^k$, where $\Gamma_A^k$ contains all the stages $h$ where we are able to choose from the running set, and $\Gamma_B^k$ contains all the stages $h$ where we are unable to choose from the running set. So $\Gamma_B^k \sqcup \Gamma_A^k=[H], \forall k\in[K]$.

Now we can prove Theorem \ref{HQLresult}. By Lemma \ref{shortfall} we have that 
\begin{equation*}
\begin{aligned}
&V_h^{*}-V_h^{\pi_k} =\mathbb{E}\Big[\sum_{h=1}^{H}\left(\max _{y \in \mathcal{A}} Q^{*}_h\left( y\right)-Q^{*}_h\left( y_h^{k}\right)\right) \Big]\le\mathbb{E}\Big[\sum_{h=1}^{H}\max _{y \in \mathcal{A}} \Big(Q^{*}_h\left( y\right)-Q^{*}_h\left( y_h^{k}\right)\Big) \Big]\\
&\le\mathbb{E}\Big[\sum_{h\in \Gamma_A^k}\max _{y \in \mathcal{A}} \Big(Q^{*}_h\left( y\right)-Q^{*}_h\left( y_h^{k}\right)\Big)\Big]+\mathbb{E}\Big[\sum_{h\in \Gamma_B^k}\max _{y \in \mathcal{A}} \Big(Q^{*}_h\left( y\right)-Q^{*}_h\left( y_h^{k}\right)\Big) \Big].\\
\end{aligned}
\end{equation*}
By Lemma \ref{outside}, the second term is upper bounded by
\begin{equation}
    0\cdot(1-\frac{1}{A^5T^5})+\sum_{h\in \Gamma_B^k}H\cdot\frac{1}{A^5T^5}\le \sum_{h\in \Gamma_B^k} H\cdot \frac{1}{A^5T^5}.
\end{equation}
By Lemma \ref{delta}, the first term is upper-bounded by
\begin{equation*}
\begin{aligned}
&\mathbb{E}\bigg[\sum_{h\in \Gamma_A^k}\mathcal{O}\Big(\frac{\delta_h}{\sqrt{k-1}}\Big)\bigg]\mathbb{P}\Big( \max _{y \in A_h^k} \Big(Q^{*}_h\left( y\right)-Q^{*}_h\left( y_h^{k}\right)\Big)\le\frac{\delta_h}{\sqrt{k-1}} \Big)\\
&+\sum_{h\in \Gamma_A^k}H\cdot\mathbb{P}\Big( \max _{y \in A_h^k} \Big(Q^{*}_h\left( y\right)-Q^{*}_h\left( y_h^{k}\right)\Big)>\frac{\delta_h}{\sqrt{k-1}} \Big)\le \mathcal{O}\Big(\sum_{\sum_{h\in \Gamma_A^k}}\frac{\delta_h}{\sqrt{k-1}}\Big)+\mathcal{O}\Big(\sum_{\sum_{h\in \Gamma_A^k}}\frac{H}{A^5T^5}\Big).\\
\end{aligned}
\end{equation*}
Then the expected cumulative regret between \emph{HQL} and the optimal policy is:
\begin{equation*}
\begin{aligned}
&\operatorname{Regret}_{MDP}(K)=\sum_{k=1}^{K}\left(V_{1}^{*}-V_{1}^{\pi_{k}}\right)\left(x_{1}^{k}\right)= \left(V_{1}^{*}-V_{1}^{\pi_{1}}\right)\left(x_{1}^{1}\right)+\sum_{k=2}^{K}\left(V_{1}^{*}-V_{1}^{\pi_{k}}\right)\left(x_{1}^{k}\right) \\
&\le H+ \sum_{k=2}^K\Big(\sum_{h\in \Gamma_B^k}^H  \frac{H}{A^5T^5}+\sum_{\sum_{h\in \Gamma_A^k}}\frac{\delta_h}{\sqrt{k-1}}+\sum_{\sum_{h\in \Gamma_A^k}}\frac{H}{A^5T^5} \Big)\le \sum_{k=2}^K\frac{\mathcal{O}(\sqrt{H^7\iota})}{\sqrt{k-1}}\le \mathcal{O}(H^{3}\sqrt{T\iota}).\hspace{1em}\square
\end{aligned}
\end{equation*}

\subsection{Proofs for \emph{FQL} }
\label{prooffql}

Our proof for \emph{HQL} can be conveniently adapted to recover the same regret bound for \emph{FQL} in the full-feedback setting. We need a variant of Lemma \ref{within}: whenever we take the estimated best feasible action in \emph{FQL}, the optimal Q-value of our action is within $\frac{3\delta_h}{\sqrt{k-1}}$ of the optimal Q-value of the optimal action, with probability at least $1-2/(AT)^5$. Then using Lemmas \ref{shortfall},\ref{Qdiff},\ref{martingale} and \ref{insideOPT} where all the $Q_h^k(y)$ are replaced by $Q_h^k(x,y)$, the rest of the proof follows without needing the assumptions for the one-sided-feedback setting.

For the tighter $\mathcal{O}(H^{2}\sqrt{T\iota})$ regret bound for \emph{FQL} in Theorem \ref{HQLresult}, we adopt similar notations and proof in \cite{jin2018q} (but adapted to the full-feedback setting) to facilitate quick comprehension for readers who are familiar with \cite{jin2018q}. The idea is to use $\left(V_{1}^{k}-V_{1}^{\pi_{k}}\right)\left(x_{h}^{k}\right)$ as a high probability upper-bound on $\left(V_{1}^{*}-V_{1}^{\pi_{k}}\right)\left(x_{1}^{k}\right)$,
and then upper-bound it using martingale properties and recursion. Because \emph{FQL} leverages the full feedback, it shrinks the concentration bounds much faster than existing algorithms, resulting in a significantly lower regret bound. See Appendix \ref{FQLappendix}.

\section{Example Applications: Inventory Control and More}
\label{inventorysection}
 \subsection{Inventory Control} 
 Inventory control is one of the most fundamental problems in supply chain optimization. It is known that base-stock policies (aka. order-up-to policies) are optimal for the classical models we are concerned with (\cite{zipkin2000foundations}, \cite{textbook}). Therefore, we let the actions for the episodic MDP be the amounts to order inventory up to. At the beginning of each step $h$, the retailer sees the inventory $x_h\in \mathbb{R}$ and places an order to raise the inventory level up to $y_h\ge x_h$. Without loss of generality, we assume the purchasing cost is $0$ (Appendix \ref{invappendix}). Replenishment of $y_h-x_h$ units arrive instantly. Then an independently distributed random demand $D_h$ from unknown distribution $F_h$ is realized. We use the replenished inventory $y_h$ to satisfy demand $D_h$. At the end of stage $h$, if demand $D_h$ is less than the inventory, what remains becomes the starting inventory for the next time period $x_{h+1}=(y_h-D_h)^+$, and we pay a holding cost $o_h$ for each unit of left-over inventory. 
 
 {\bf Backlogged model}: if demand $D_h$ exceeds the inventory, the additional demand is backlogged, so the starting inventory for the next period is $x_{h+1}=y_h-D_h<0$. We pay a backlogging cost $b_h>0$ for each unit of the extra demand.  The reward for period $h$ is the negative cost:
\begin{equation*}
  r_h(x_h,y_h)=-\big(c_h(y_h-x_h)+o_h(y_h-D_h)^++b_h(D_h-y_h)^+\big). 
\end{equation*}
This model has full feedback because once the environment randomness--the demand is realized, we can deduce what the reward and leftover inventory would be for all possible state-action pairs.
 
{\bf Lost-sales model}: is considered more difficult. When the demand exceeds the inventory, the extra demand is lost and unobserved instead of backlogged. We pay a penalty of $p_h>0$ for each unit of lost demand, so the starting inventory for next time period is $x_{h+1}=0$. The reward for period $h$ is:
\begin{equation*}r_h(x_h,y_h)=-\big(c_h(y_h-x_h)+o_h(y_h-D_h)^++p_h(D_h-y_h)^+\big).\end{equation*}
Note that we cannot observe the realized reward because the extra demand $(D_h-y_h)^+$ is unobserved for the lost-sales model. However, we can use a pseudo-reward $r_h(x_h, y_h)=-\big(o_h(y_h-D_h)^+-p_h\min(y_h,D_h)\big)$ that will leave the regret of any policy against the optimal policy unchanged (\cite{agrawal2019learning}, \cite{yuan2019marrying}). This pseudo-reward can be observed because we can always observe $\min(y_h,D_h)$. Then this model has (lower) one-sided feedback because once the environment randomness--the demand is realized, we can deduce what the reward and leftover inventory would be for all possible state-action pairs where the action (order-up-to level) is lower than our chosen action, as we can also observe $\min(y'_h,D_h)$ for all $y'_h\le y_h$.

\paragraph{Past literature} typically studies under the assumption that the demands along the horizon are i.i.d. (\cite{agrawal2019learning}, \cite{zhang2018closing}). Unprecedentedly, our algorithms solve optimally the episodic version of the problem where the demand distributions are arbitrary within each episode.

\paragraph{Our result}: it is easy to see that for both backlogged and lost-sales models, the reward only depends on the action, the time step and the realized demand, not on the state--the starting inventory. However, the feasibility of an action depends on the state, because we can only order up to a quantity no lower than the starting inventory. The feasible action set at any time is $\mathcal A\cap [x_h,\infty)$. The next state $x_{h+1}(\cdot)$ and $a_h(\cdot)$ are monotonely non-decreasing, and the optimal value functions are concave.
 
Since inventory control literature typically considers a continuous action space $[0, M]$ for some $M\in \mathbb{R}^+$, we discretize $[0, M]$ with step-size $\frac{M}{T^2}$, so $A=|\mathcal{A}|=T^2$. Discretization incurs additional regret $\operatorname{Regret}_{gap}= \mathcal{O}(\frac{M}{T^2}\cdot HT)=o(1)$ by Lipschitzness of the reward function. 
For the lost-sales model, \emph{HQL} gives $\mathcal{O}(H^3\sqrt{T\log T})$ regret. For the backlogged model, \emph{FQL} gives $\mathcal{O}(H^2\sqrt{T\log T})$ regret, and \emph{HQL} gives $\mathcal{O}(H^3\sqrt{T\log T})$ regret. See details in Appendix \ref{invappendix}.

\paragraph{Comparison with existing Q-learning algorithms}: If we discretize the state-action space optimally for \cite{jin2018q} and for \cite{1912.06366}, then applying \cite{jin2018q} to the backlogged model gives a regret bound of $\mathcal{O}(T^{3/4}\sqrt{\log T})$. Applying \cite{1912.06366} to the backlogged inventory model with optimized aggregation gives us $\mathcal{O}(T^{2/3}\sqrt{\log T})$. See details in Appendix \ref{priorQ}.

\subsection{Other Example Applications}
\paragraph{Online Second-Price Auctions}: the auctioneer needs to decide the reserve price for the same item at each round (\cite{bandit}). Each bidder draws a value from its unknown distribution and only submits the bid if the value is no lower than the reserve price. The auctioneer observes the bids, gives the item to the highest bidder if any, and collects the second highest bid price (including the reserve price) as profits. In the episodic version, the bidders' distributions can vary with time in an episode, and the horizon consists of $K$ episodes. This is a (higher) one-sided-feedback problem that can be solved efficiently by \emph{HQL} because once the bids are submitted, the auctioneer can deduce what bids it would have received for any reserve price higher than the announced reserve price. 

\paragraph{ Airline Overbook Policy}: is to decide how many customers the airline allows to overbook a flight (\cite{airline}). This problem has lower-sided feedback because once the overbook limit is reached, extra customers are unobserved, similar to the lost-sales inventory control problem.

\paragraph{Portfolio Management} is allocation of a fixed sum of cash on a variety of financial instruments (\cite{nobel}). In the episodic version, the return distributions are episodic. On each day, the manager collects the increase in the portfolio value as the reward, and gets penalized for the decrease. This is a full-feedback problem, because once the returns of all instruments become realized for that day, the manager can deduce what his reward would have been for all feasible portfolios.

\section{Numerical Experiments}
\label{numerical}
We compare \emph{FQL} and \emph{HQL} on the backlogged episodic inventory control problem against 3 benchmarks: the optimal policy (\emph{OPT}) that knows the demand distributions beforehand and minimizes the cost in expectation, \emph{QL-UCB} from \cite{jin2018q}, and \emph{Aggregated QL} from \cite{1912.06366}. 

For \emph{Aggregated QL} and \emph{QL-UCB}, we optimize by taking the Q-values to be only dependent on the action, thus reducing the state-action pair space. \emph{Aggregated QL} requires a good aggregation of the state-action pairs to be known beforehand, which is usually unavailable for online problems. We aggregate the state and actions to be multiples of $1$ for \cite{1912.06366} in Table \ref{alltable}. We do not fine-tune the confidence interval in \emph{HQL}, but use a general formula $\sqrt{\frac{H\log(HKA)}{k}}$ for all settings. We do not fine-tune the UCB-bonus in \emph{QL-UCB} either. Below is a summary list for the experiment settings. Each experimental point is run $300$ times for statistical significance.

\noindent\begin{minipage}{.5\linewidth}
 {\bf Episode length}: $H=1, 3, 5$.
 
 {\bf Number of episodes}: $K=100,500,2000$.
 
{\bf Demands}: $D_h\sim (10-h)/2+U[0,1]$.

\end{minipage}%
\begin{minipage}{.5\linewidth}
 {\bf Holding cost}: $o_h=2$.
 
 {\bf Backlogging cost}: $b_h=10$.
 
{\bf Action space}: $[0, \frac{1}{20}, \frac{2}{20}, \dots, 10]$.
\end{minipage}
\vspace{-0.8cm}

\begin{center}
\begin{table}[ht]
 \caption{Comparison of cumulative costs for backlogged episodic inventory control}
 \begin{tabular}{p{0.2cm}p{0.8cm}p{0.8cm}p{0.6cm}p{0.8cm}p{0.6cm}p{0.8cm}p{0.7cm}p{1cm}p{0.9cm}p{1cm}p{0.6cm}} 
\multicolumn{2}{c}{} & \multicolumn{2}{c}{ \emph{OPT} } &  \multicolumn{2}{c}{\emph{FQL} } & \multicolumn{2}{c}{\emph{HQL} } & \multicolumn{2}{c}{ \emph{Aggregated QL}} & \multicolumn{2}{c}{ \emph{QL-UCB}}\\  
{\bf H} & {\bf K }  & {\bf mean} & {\bf SD} &  {\bf mean} & {\bf SD} &  {\bf mean} & {\bf SD}  &  {\bf mean} & {\bf SD} &  {\bf mean} & {\bf SD} \\  
 \hline

 \multirow{3}{*}{1} & $100 $ & 88.2 & 4.1 & 103.4 & 6.6  & 125.9 & 19.2 & 406.6 & 16.1 & 3048.7 & 45.0\\ 
  & $500 $&  437.2   & 4.4 & 453.1 & 6.6  & 528.9 & 44.1 & 1088.0 & 62.2 & 4126.3 & 43.7 \\ 
 & $2000 $&  1688.9  & 2.8 &  1709.5 & 5.8  & 1929.2 & 89.1  & 2789.1 & 88.3 & 7289.5 & 57.4\\ 
 \multirow{3}{*}{3} & $100$ & 257.4 & 3.2  & 313.1 & 7.6  & 435.1 & 17.6 & 867.9 & 29.2 & 7611.1 & 46.7\\
 & $500$& 1274.6 & 6.1 & 1336.3 & 10.5  & 1660.2 & 48.7  & 2309.1 & 129.8 & 10984.0 & 73.0 \\
 & $2000$ & 4965.6 & 8.3  & 5048.2 & 13.3 & 5700.6 & 129.1 & 7793.5  & 415.6  & 22914.7  & 131.1\\
 \multirow{3}{*}{5} & $ 100$ & 421.2 & 3.3 & 528.0 & 10.4  & 752.6 & 32.9 & 1766.8 & 83.8 & 11238.4 & 140.0 \\ 
& $500$  &  2079.0 & 8.2  & 2204.0 & 13.1  &  2735.1  & 114.1 & 4317.5 & 95.8 & 15458.1 & 231.8  \\ 
& $ 2000$ &  8285.7 & 8.3 & 8444.7 & 16.4 &  9514.4 &  364.2 & 13373.0  & 189.2 & 40347.0 & 274.6\\ 
\end{tabular}
\vspace{-0.5cm}
 \label{alltable}
\end{table}
\end{center}
\vspace{-0.3cm}
Table \ref{alltable} shows that both \emph{FQL} and \emph{HQL} perform promisingly, with significant advantage over the other two algorithms. \emph{FQL} stays consistently very close to the clairvoyant optimal, while \emph{HQL} catches up rather quickly using only one-sided feedback. See more experiments in Appendix \ref{numericappend}.
 \section{Conclusion}
 \label{conclusion}
 We propose a new Q-learning based framework for reinforcement learning problems with richer feedback. Our algorithms have only logarithmic dependence on the state-action space size, and hence are barely hampered by even infinitely large state-action sets. This gives us not only efficiency, but also more flexibility in formulating the MDP to solve a problem. Consequently, we obtain the first $\mathcal{O}(\sqrt T)$ regret algorithms for episodic inventory control problems. We consider this work to be a proof-of-concept showing the potential for adapting reinforcement learning techniques to problems with a broader range of structures.



\newpage
\bibliography{main}
\bibliographystyle{alpha}

\newpage
\appendix
\begin{center}
    {\LARGE{Appendices} }
\end{center}

\section{Proof for Lower Bounds (Theorem \ref{FQLlowerbound})}
\label{lowerboundsappend}
We construct here an easy instance of the episodic inventory control problem (as in Section \ref{inventorysection}), for which the regret of any algorithm must be at least $\Omega(\sqrt{HT})$. 

\begin{proof}
Suppose for any time $h$ in an episode, the demand distribution is $h+100$ units w.p. $0.5+\frac{1}{\sqrt{K}}$, and $h+200$ units w.p. $0.5-\frac{1}{\sqrt{K}}$. Note that this is not a constant gap, because $K=\Theta(T)$. Suppose the unit costs for holding, backlogging, and lost-sales penalty are all the same. We generously allow the algorithm to have the correct prior that the best base stock level is one of these two actions, and the other actions are worse than these two actions. Then for each time step $h$, our problem of estimating the $Q$-values degenerates to the stochastic full-feedback online bandit problem. 

It is a well-known result that in this case, each stage $h$ will incur at least a $\Omega(\sqrt{K})$ regret across the $K$ episodes. In particularly, at any time step of any episode, the probability of any algorithm choosing the wrong action is lower-bounded by $\frac{1}{12}$: see Corollary 2.10 in \cite{slivkins2019introduction}. Then at each time step, the algorithm incur a $\Omega(\frac{1}{12\sqrt{K}})$ expected regret. This regret at stage $h$ across the $K$ episodes sum up to $\Omega(\sqrt{K})$ expected regret. Since there are $H$ time steps with demand independent from each other, we have that the regret of this example is lower bounded by $\Omega(H\sqrt{K})=\Omega(\sqrt{HT})$ regret. Note that even though we assume the algorithm receives full information feedback at each time step, Corollary 2.10 in \cite{slivkins2019introduction} still applies by scaling the time horizon by a factor of $2$, which does not affect the regret bound. Then we put back the $\Theta(M\cdot\max(|o_h|,|b_h|))$ factor (or $\Theta(M\cdot\max(|o_h|,|p_h|))$ factor) because in the Preliminaries we scaled the unit costs down by $\Theta(M)$ to have the reward for each time period bounded by $1$.  
\end{proof}



\section{Missing Proofs for \emph{HQL}}
\label{hqlappendix}

\begin{proof}{\bf (Lemma \ref{weights})}
We prove number (4) by induction. For the base case $t=1$, we have $\sum_{i=1}^t\frac{\alpha_t^i}{\sqrt{i}}=\alpha_1^1=1$ so the statement holds. For $t\ge 2$, by the relationship $\alpha_t^i=(1-\alpha_t)\alpha_{t-1}^i$ for $i=1, \dots, t-1$ we have 
\begin{equation}
\sum_{i=1}^{t} \frac{\alpha_{t}^{i}}{\sqrt{i}}=\frac{\alpha_{t}}{\sqrt{t}}+\left(1-\alpha_{t}\right) \sum_{i=1}^{t-1} \frac{\alpha_{t-1}^{i}}{\sqrt{i}}
\end{equation}
Assuming the inductive hypothesis holds, on the one hand, 
\begin{equation}
\frac{\alpha_{t}}{\sqrt{t}}+\left(1-\alpha_{t}\right) \sum_{i=1}^{t-1} \frac{\alpha_{t-1}^{i}}{\sqrt{i}} \geq \frac{\alpha_{t}}{\sqrt{t}}+\frac{1-\alpha_{t}}{\sqrt{t-1}} \geq \frac{\alpha_{t}}{\sqrt{t}}+\frac{1-\alpha_{t}}{\sqrt{t}}=\frac{1}{\sqrt{t}}
\end{equation}
where the first inequality holds by the inductive hypothesis.
On the other hand,
\begin{equation}
\begin{aligned}
\frac{\alpha_{t}}{\sqrt{t}}+\left(1-\alpha_{t}\right) \sum_{i=1}^{t-1} \frac{\alpha_{t-1}^{i}}{\sqrt{i}} & \leq \frac{\alpha_{t}}{\sqrt{t}}+\frac{(1+1/H)\left(1-\alpha_{t}\right)}{\sqrt{t-1}}=\frac{H+1}{\sqrt{t}(H+t)}+\frac{(1+1/H) \sqrt{t-1}}{H+t} \\
& \leq \frac{H+1}{\sqrt{t}(H+t)}+\frac{(1+1/H) \sqrt{t}}{H+t}\le \frac{(1+1/H)}{\sqrt{t}}
\end{aligned}
\end{equation}
where the first inequality holds by the inductive hypothesis.
\end{proof}
This is a tighter bound than the bound in \cite{jin2018q}. For rest of the lemma, see Lemma 4.1 in \cite{jin2018q}. 

The following proof for shortfall decomposition is adapted from Benjamin Van Roy's reinforcement learning notes for the class MS 338 at Stanford University. 
\begin{proof}{\bf (Lemma \ref{shortfall})}
For any policy $\pi$, let $y_h^k$ denote the action the policy $\pi_k$ takes at stage $h$ of episode $k$. Let $R_h$ denote the expected reward of $y_h^k$.

\noindent\begin{minipage}{.4\linewidth}
\begin{equation*}
\mathbb{E}_{\pi}\left[Q^{*}\left(x_{h}^k, y_{h}^k\right)\right]=\mathbb{E}_{\pi}\left[Z_{h+1}\right] 
\end{equation*}
\end{minipage}%
\begin{minipage}{.6\linewidth}
\begin{equation*}
\text{where }Z_{h+1}=\left\{\begin{array}{ll}
R_{h}+\max _{y} Q^{*}\left(x_{h+1}^k, y\right) & \text { if } h<H \\
R_{h} & \text { if } h=H
\end{array}\right.
\end{equation*}
\end{minipage}
\vspace{0.5cm}

Therefore,
\begin{equation*}
\begin{aligned}
 V_1^{*}-V_1^{\pi_k} &=\mathbb{E}_{\pi}\Big[\max _{a \in \mathcal{A}} Q^{*}(x_{1}^k, a)-\sum_{h=1}^{H} R_{h} \Big]\\
 &=\mathbb{E}_{\pi}\Big[\max _{a \in \mathcal{A}} Q^{*}\left(x_{1}^k, a\right)-\sum_{h=1}^{H}\left(R_{h}-Z_{h+1}+Q^{*}\left(x_{h}^k, y_{h}^k\right)\right)\Big] \\
&=\mathbb{E}_{\pi}\Big[\sum_{h=1}^{H}\big(\max _{a \in \mathcal{A}} Q^{*}(x_{h}^k, a)-Q^{*}(x_{h}, y^k_{h})\big) \Big] 
\end{aligned}
\end{equation*}
\end{proof}

\begin{proof}{\bf (Lemma \ref{Qdiff})}
From the Bellman optimality equation (\ref{Bellmanskip}), and the fact that $ \sum_{i=0}^{k-1} \alpha_{k-1}^{i}=1$, we have
\[Q_{h}^{*}(y)=\alpha_{k-1}^{0} Q_{h}^{*}(y)+\sum_{i=1}^{k-1} \alpha_{k-1}^{i}\left[\mathbb{E}_{x', \tau_h^i(y)\sim\mathbb{P}(\cdot|x,y)}[\tilde{r}^*_{\tau_h^i(x, y)}(y) + V^*_{\tau_h^i(x, y)}(x'_{\tau_h^i( y)})]\right]\]

Subtracting Equation \ref{Qupdatehql} from this equation, and adding some of the middle terms that cancel with themselves gives us Lemma \ref{Qdiff}.
\end{proof}

\begin{proof}{\bf (Lemma \ref{martingale})} Since we assume that given a fixed value $D_h$, the next state $x_{h+1}(y_h)$ is increasing in $y_h$, and $a_h(x_h)$ is increasing in $x_h$ for the lower one-sided-feedback problem, we conclude that the (deterministic given $D_h$) dynamics are monotone with respect to any simulation starting point $x_h$. Since the algorithm chooses at least the maximal action in $A^k_h$ at all times, this implies it can observe the simulated trajectory started from any $x_h\in A^k_h$ for any $k,h\in [K]\times[H]$. 


Let $\mathcal{F}_h^i$ be the $\sigma$-field generated by all the random variables until episode $i$, stage $h$. Then for any $\tau\in[K]$, $\left(V^*_{\tau_h^i(x, y)}(x_{\tau_h^i(x, y)}^i)+\tilde{r}^*_{\tau_h^i(x, y)}-\mathbb{E}_{\tilde{r}^*,x', \tau_h^i(x, y)\sim\mathbb{P}(\cdot|x,y)}\Big[\tilde{r}^*_{\tau_h^i(x, y)} + V^*_{\tau_h^i(x, y)}(x'_{\tau_h^i(x,y)})\Big]\right)_{i=1}^{\tau}$ is a martingale difference sequence w.r.t. the filtration $\left\{\mathcal{F}_h^{i}\right\}_{i \geq 0}$. Then by Azuma-Hoeffding Theorem, we have that with probability at least $1-(1/AT)^9$:
\begin{equation}
\begin{aligned}
&\left|\sum_{i=1}^{k-1} \alpha_{k-1}^{i}\cdot\left(V^*_{\tau_h^i(x, y)}(x_{\tau_h^i(x, y)}^i)+\tilde{r}^*_{\tau_h^i(x, y)}-\mathbb{E}_{\tilde{r}^*,x', \tau_h^i(x, y)\sim\mathbb{P}(\cdot|x,y)}\Big[\tilde{r}^*_{\tau_h^i(x, y)} + V^*_{\tau_h^i(x, y)}(x'_{\tau_h^i(x, y)})\Big]\right)\right| \\
&\leq \frac{c H}{2} \sqrt{\sum_{i=1}^{k-1}\left(\alpha_{k-1}^{i}\right)^{2} \cdot \iota} \hspace{0.1cm}\leq c \sqrt{\frac{H^{3} \iota}{k-1}} \hspace{0.5cm}\text{ for any constant $c\ge 2\sqrt{2}$.}
\end{aligned}
    \label{azuma}
\end{equation}
By union bound, we have with probability at least $1-(1/AT)^8$ that for any $x, h,k, y\in A_h^k$,
\begin{equation*}
\begin{split}
     \left|\sum_{i=1}^{k-1} \alpha_{k-1}^{i}\left(V^*_{\tau_h^i(x, y)}(x_{\tau_h^i(x, y)}^i)+\tilde{r}^*_{\tau_h^i(x, y)}-\mathbb{E}_{\tilde{r}^*, x', \tau_h^i(x, y)\sim\mathbb{P}(\cdot|x,y)}\Big[\tilde{r}^*_{\tau_h^i(x, y)} + V^*_{\tau_h^i(x, y)}(x'_{\tau_h^i(x,  y)})\Big]\right)\right|\\
     \leq c \sqrt{\frac{H^{3} \iota}{k-1}}    
\end{split}
    \label{unionbound}
\end{equation*}
Then Lemma \ref{martingale} follows immediately this equation and Lemma \ref{Qdiff}. 
\end{proof}

\begin{proof}{\bf (Upper Bound on $\delta_h$'s)}
We set $d_h=(\delta_h)\cdot\left(1+\frac{1}{H}\right)^{h}$ and observe that the recurrence implies
\begin{equation}d_h= d_{h+1}+H+2\sqrt{2}\sqrt{H^3\iota}\end{equation}

Then from this recursion we see $d_h\le H^2+2\sqrt{2H^5\iota}$ for all $h$. Since $d_h,\delta_h$ differ by a constant factor $\left(1+\frac{1}{H}\right)^{h}$, we have $\delta_h= \frac{H^2+2\sqrt{2H^5\iota}}{\left(1+\frac{1}{H}\right)^{h}}\le 4\sqrt{H^5\iota}$.
\end{proof}

\begin{proof}{\bf (Lemma \ref{delta})}
We prove by backward induction. Note that all of our statements below hold with high probability. In particular, we will use Azuma-Hoeffding no more than $AT$ times in the below, with each use holding with probability at least $1/(AT)^9$. Under the assumption that each use of Azuma-Hoeffding holds we will obtain the statement of the Lemma. Our proof goes by induction; for the base case $\delta_{H+1}=0$ satisfies the Inequality \ref{inequalityofdelta} (actually equality here) with probability $1$ based on Bellman equations.

Now suppose inequality \ref{inequalityofdelta} is true for any $k\in[K]$, $x\in\mathcal{S}$, for any $h'=\tau_h^k(x, a)$ that has $a\in A_h^k$:
\[\max_{y\in A_{\tau_h^k(x,a)}^k}\left|(Q_{\tau_h^k(x,a)}^k-Q_{\tau_h^k(x,a)}^*)( y)\right|\le \frac{\delta_{\tau_h^k(x,a)}}{\sqrt{k-1}},\forall a\in A_h^k, \text{ w.h.p.}\]
Now we induct on the previous stage $h'=h$. By Lemma \ref{martingale}, with probability at least $1-1/(AT)^8$
\begin{equation*}
\begin{aligned}
&\max_{y\in A_h^k}\Big|(Q_{h}^{k}-Q_{h}^{*})( y)\Big|\leq \max_{a\in A_h^k}\bigg\{\alpha_{k-1}^{0} H\\
&+\sum_{i=1}^{k-1} \alpha_{k-1}^{i}\bigg[\left(V_{\tau_h^i(x, a)}^{i+1}-V_{\tau_h^i(x, a)}^{*}\right)\left({x^i_{\tau_h^i(x, a)}}'\right)+\tilde{r}^i_{h,\tau_h^i(x, a)}-\tilde{r}^*_{h, \tau_h^i(x, a)}\bigg]+c \sqrt{\frac{H^{3} \iota}{k-1}}\bigg\}
\end{aligned}
\end{equation*}

Since based on our inductive hypothesis,we have 
\begin{equation*}
\begin{split}
    \max_{a\in A_h^k}\Big[\left(V_{\tau_h^i(x, a)}^{i+1}-V_{\tau_h^i(x,a)}^{*}\right)\left({x^i_{\tau_h^i(x, a)}}'\right)+\tilde{r}^i_{h,\tau_h^i(x,a)}-\tilde{r}^*_{h,\tau_h^i(x,a)}\Big]\\
    \le\max_{y\in A_{\tau_h^i(x,a)}^i}|(Q_{\tau_h^i(x,a)}^{i+1}-Q_{\tau_h^i(x,a)}^{*})(y)|\le \frac{\delta_{\tau_h^i(x,a)}}{\sqrt{i}}
\end{split}
\end{equation*} 
then
\begin{equation}
\max_{y\in A_h^k}\Big|(Q_{h}^{k}-Q_{h}^{*})( y)\Big|\le \max_{a\in A_h^k}\Bigg\{\alpha_{k-1}^{0}H+(\sum_{i=1}^{k-1} \alpha_{k-1}^{i}\cdot \frac{\delta_{\tau_h^i(x,a)}}{\sqrt{i}})+c \sqrt{\frac{H^{3} \iota}{k-1}}\Bigg\}.
\end{equation}

We can bound $\alpha_{k-1}^{0}$ by $\frac{1}{\sqrt{k}}$, and bound $\sum_{i=1}^{k-1} \alpha_{k-1}^{i}\cdot \frac{\delta_{\tau_h^i(x,a)}}{\sqrt{i}}$ by $\frac{1+1/H}{\sqrt{k-1}}\delta_{\tau_h^i(x,a)}$ using Lemma \ref{weights}:
\begin{equation}
\begin{aligned}
\max_{y\in A_h^k}\Big|(Q_{h}^{k}-Q_{h}^{*})(y)\Big|&\le \frac{1}{\sqrt{k}}H+\frac{1+1/H}{\sqrt{k-1}}\delta_{\tau_h^i(x,a)}+c \sqrt{\frac{H^{3} \iota}{k}}\\
&\le \frac{1}{\sqrt{k-1}}H+\frac{1+1/H}{\sqrt{k-1}}\delta_{h+1}+c \sqrt{\frac{H^{3} \iota}{k-1}} \hspace{1em}=\frac{\delta_h}{\sqrt{k-1}}
\end{aligned}
\end{equation}
where the second inequality is because $\tau_h^i(x,a)\ge h+1$ and $\delta_h$'s is a decreasing sequence. 
The last equality is true based on the recursive definition of $\delta_h$. 
\end{proof}

\begin{proof}{\bf (Lemma \ref{insideOPT})}
Recall for any $(x, h, k)$, $y_h^{k*}=\argmax_{y\in A_h^{k}}Q_h^k(y)$ in \emph{HQL}. 
Suppose $y_h^* \not\in A_h^k$, then $Q_h^k(y_h^*)<Q_h^k( y_h^{k*})-\frac{8\sqrt{H^5\iota}}{\sqrt{k-1}}=Q_h^k(x,y_h^{k*})-\frac{2\delta_h}{\sqrt{k-1}}$. Then we need either $Q_h^k( y_h^*)<Q_h^*( y_h^*)-\frac{\delta_h}{\sqrt{k-1}}$ or $Q_h^k(y_h^{k*})>Q_h^*( y_h^{k*})+\frac{\delta_h}{\sqrt{k-1}}$. Thus by Lemma \ref{delta}, Prob($y_h^* \not\in A_h^k(x)$)$\le\frac{1}{(AT)^5}$.
\end{proof}

\begin{proof}{\bf (Lemma \ref{within})}
Lemma \ref{delta} says for any $y\in A_h^k$, our estimated $Q_h^k(y)$ differs from the optimal value $Q_h^*(y)$ by at most $\frac{\delta_h}{\sqrt{k-1}}$ with high probability at least $1-\frac{1}{(AT)^5}$. Therefore, the optimal Q-value of the optimal policy's action $Q^*( y_h^*)$ is at most $\frac{\delta_h}{\sqrt{k-1}}$ more than the estimated Q-value of our estimated best arm $Q_h^k( y_h^{k*})$, with high probability at least $1-\frac{1}{(AT)^5}$. Any action we take in $A_h^k$ has an estimated Q-value no more than $\frac{8\sqrt{H^5\iota}}{\sqrt{k-1}}=\frac{2\delta_h}{\sqrt{k-1}}$ lower than $Q_h^k( y_h^{k*})$ base on our algorithm. Therefore, the optimal Q-value of the optimal policy's action $Q^*( y_h^*)$ is at most $\frac{3\delta_h}{\sqrt{k-1}}$ more than the estimated Q-value of any action $y\in A_h^k(x)$, with high probability at least $1-\frac{1}{(AT)^5}$. Then again, by Lemma \ref{delta}, we know that the optimal Q-value of the optimal policy's action $Q^*( y_h^*)$ is at most $\frac{4\delta_h}{\sqrt{k-1}}$ more than the optimal Q-value of any action in $ A_h^k$, with high probability at least $1-\frac{2}{(AT)^5}$. 
\end{proof}

\begin{proof}{\bf (Lemma \ref{outside})}
From Lemma \ref{insideOPT}, we know that with with probability at least $1-\frac{1}{(AT)^5}$, the optimal action is in the running set, which is inaccessible. Then recall the assumptions that the value functions are concave and that the feasible action set at any time is an interval of the form $\mathcal A\cap [a,\infty)$ for some $a$ dependent on the state. So if we cannot play in the running set, then the running set, and hence w.h.p. the true optimal action, is contained in $(-\infty,a)$. By concavity, this implies that the closest feasible action to the running set is optimal in this case w.p. at least $1-\frac{1}{(AT)^5}$. 
\end{proof}


\section{Missing Proofs for Inventory Control}
\label{invappendix}
We gave a more detailed description of the backlogged model and the lost-sales model of the episodic stochastic inventory control problems.

 \begin{lemma}
For any $h\in [H]$, the optimal $V$-value function $V^*_h(x)$ is concave in $x$, and the optimal $Q$-value function $Q^*_h(y)$ is concave in $y$. This is true for the lost sales and the backlogged models.
\label{convexity}
\end{lemma}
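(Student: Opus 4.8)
The plan is to prove concavity by backward induction on the stage $h$, exploiting the Bellman equations together with the well-known fact that the single-stage cost functions are convex and that concavity is preserved under the relevant operations. I would establish the statement for the $V$-value and $Q$-value simultaneously, starting from the terminal condition.

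First I would set up the base case. Since $V_{H+1}^*(x)=0$ for all $x$, this is trivially concave (in fact linear). This gives the starting point of the induction. The inductive hypothesis is that $V_{h+1}^*(\cdot)$ is concave; I want to deduce that $Q_h^*(\cdot)$ is concave in $y$ and then that $V_h^*(\cdot)$ is concave in $x$. For the $Q$-to-$V$ step, recall from the Bellman equations that $V_h^*(x)=\max_{\text{feasible }y\ge x} Q_h^*(y)$ where the feasible set is $\mathcal A\cap[x,\infty)$. The key structural observation is that the partial maximization of a concave function over a feasible set that is a tail-interval $[x,\infty)$, with $x$ as the parameter, yields a concave function of $x$: because $Q_h^*$ is concave and unimodal, the constrained maximizer is the unconstrained optimum whenever it is feasible, and equals $Q_h^*(x)$ (the boundary) otherwise, so $V_h^*$ is a concave function capped and then following the concave boundary — I would verify this pasting is concave. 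This is exactly the order-up-to/base-stock structure.

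For the $V$-to-$Q$ step, I would work in negative-cost (reward) form. Writing $r_h(y)$ for the stagewise reward (which, after dropping the purchasing cost that only contributes a term linear in $y-x$, is $-o_h(y-D_h)^+ - b_h(D_h-y)^+$ in the backlogged case, or the analogous pseudo-reward in the lost-sales case), each realized $r_h(\cdot)$ is concave in $y$ since it is the negative of a sum of convex piecewise-linear functions. Next, by the Bellman equation $Q_h^*(y)=\mathbb{E}_{D_h}[r_h(y)+V_{h+1}^*(x_{h+1}(y,D_h))]$, I need $V_{h+1}^*(x_{h+1}(y,D_h))$ to be concave in $y$ for each fixed realization $D_h$. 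Here $x_{h+1}(y,D_h)=(y-D_h)^+$ (lost-sales) or $y-D_h$ (backlogged) is a nondecreasing convex (in fact affine or piecewise-affine-convex) function of $y$; composing the concave nondecreasing $V_{h+1}^*$ with an affine map preserves concavity in the backlogged case directly, and in the lost-sales case I must additionally use that $V_{h+1}^*$ is concave and nondecreasing on the relevant range composed with the convex $(y-D_h)^+$, which still yields concavity because the composition of a concave nondecreasing function with a concave function is concave, and $(y-D_h)^+$ is not concave — so this is the delicate point. Finally, taking expectation over $D_h$ preserves concavity, and adding the linear purchasing-cost term preserves it.

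The main obstacle I anticipate is the lost-sales transition $x_{h+1}=(y-D_h)^+$, which is convex rather than affine in $y$, so the naive "concave composed with affine" argument does not apply. I would resolve this by restricting attention to the region $y\ge 0$ and arguing that $V_{h+1}^*$ is not only concave but also nonincreasing (or by exploiting the specific base-stock optimality structure so that the effective composition stays on the concave side), and by checking the kink at $y=D_h$ does not destroy concavity because both pieces join with a nonincreasing slope. This monotonicity-plus-concavity bookkeeping across the kink, carried consistently through the induction, is where the real care is required; the remaining algebra (expectation, linear terms, the max operation) is routine once the composition step is pinned down.
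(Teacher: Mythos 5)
Your proposal is correct and follows essentially the same route as the paper: backward induction through the Bellman equation, using that concavity is preserved by expectation, by addition of the concave (negative piecewise-linear) stage reward, and by the partial maximization $V_h^*(x)=\max_{y\ge x}Q_h^*(y)$, whose constant-then-concave pasting you verify just as the paper does. You are in fact more careful than the paper on the one delicate point: the paper's inductive step writes $Q^*_{h}(x,y)=\mathbb{E}[V^*_{h+1}(y-D_{h})+r_{h}]$, i.e.\ it treats only the affine backlogged transition even though the lemma claims both models, whereas you correctly flag that the lost-sales transition $(y-D_h)^+$ is convex and resolve it by noting $V_{h+1}^*$ is concave \emph{and nonincreasing} (the feasible set $\mathcal{A}\cap[x,\infty)$ shrinks as $x$ grows), so the composition remains concave---just repair the one internal slip where you wrote ``nondecreasing'' for this property before stating the correct monotonicity at the end.
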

\begin{proof} {\bf (Lemma \ref{convexity})}
We prove this by backward induction. The base case is $Q^*_H(x,y)$ and $V^*_H(x)$. Since $Q^*_H(y)$ is just the expectation of a one time reward for the last period, we know that it is $Q^*_H(x, y)=r_H(x,y, D_H)=-[o_H(y-D_H)^++p_H\min(y,D_H)]$. This function is obviously concave in $y$. Note that the Q-values are not affected by $x$ for the inventory control problems. Since $V^*_H(x)=\max_{y\ge x}Q^*_H(x,y)$, the graph of $V^*_H(x)$ is constant on the left side of $x=\argmax_{y}Q^*_H(x,y)$, and then goes down with a slope of $o_H$ on the right side of $x=\argmax_{y}Q^*_H(x,y)$. So $V^*_H(x)$ is obviously also concave. 

Now suppose $Q^*_{h+1}(x,y)$ and $V^*_{h+1}(x)$ are concave. It remains to show concavity of $Q^*_{h}(x,y)$ and $V^*_{h}(x)$.

We know $Q^*_{h}(x,y)=\mathbb{E}[V^*_{h+1}(y-D_{h})+r_{h}(x,y,D_h)]$. We know $r_{h}(x,y,D_h)$ is concave in $y$ for the same reason that $Q^*_H(x,y)$ is concave. We know that $V^*_{h+1}(x)$ is concave in $x$ from our induction hypothesis, which means $V^*_{h+1}(y-D_{h})$ is concave in $y$ for any value of $D_h$. Therefore, $\mathbb{E}[V^*_{h+1}(y-D_{h})+r_h]$ is also concave, being a weighted average of concave functions. So we know $Q^*_h(x,y)$ is also concave in $y$. 
Then again $V^*_h(x)=\max_{y\ge x}Q^*_h(x,y)$ is concave for the same reason why $V^*_H(x)$ is concave.
\end{proof}

\begin{proof}{\bf (Assumption of $0$ Purchasing Costs)}
We want to show that for the episodic lost-sales (and similarly for the backlogged) model, we can amortize the unit purchasing costs $c_h$ into unit holding costs $o_h$ and unit lost-sales penalty $p_h$, so that without loss of generality we can assume $0$ purchasing costs.
\begin{equation}
\forall h\ge 2, y_h-x_h=y_h-D_h+D_h-x_h=(y_h-D_h)^+-(D_h-y_h)^++D_h-(y_{t-1}-D_{t-1})^+
\label{y-x}
\end{equation}

Let $c_h$ denote the unit purchasing cost, then the total sum of costs starting from stage 2 is
\begin{equation*}
\begin{aligned}
&\sum_{h=2}^H \Big( c_h(y_h-x_h)+o_h(y_h-D_h)^++p_h(D_h-y_h)^+\Big)\\
=&\sum_{h=2}^H \Big( c_hD_h-c_h(y_{h-1}-D_{h-1})^++(o_h+c_h)(y_h-D_h)^++(p_h-c_h)(D_h-y_h)^+\Big)\\
\end{aligned}
\end{equation*}
And the cost of stage 1 is equal to $o_1(y_1-D_1)^++p_1(D_1-y_1)^++c_1\big((y_1-D_1)^+-(D_1-y_1)^++D_1-x_1\big)$.

Let $c_{H+1}\ge 0$ denote the salvage price at which we sell the remaining inventory $(y_H-D_H)^+$ at the end of each episode. Then the total sum of costs from stage 1 to H is
\begin{equation*}
\begin{aligned}
&\sum_{h=2}^H \Big( c_hD_h-c_h(y_{h-1}-D_{h-1})^++(o_h+c_h)(y_h-D_h)^++(p_h-c_h)(D_h-y_h)^+\Big)\\
&+c_1(y_1-D_1)^+-c_1(D_1-y_1)^++c_1D_1-c_1x_1+o_1(y_1-D_1)^++p_1(D_1-y_1)^+-c_{H+1}(y_H-D_H)^+\\
&=\sum_{h=2}^H \Big( c_hD_h-c_h(y_{h-1}-D_{h-1})^++(o_h+c_h)(y_h-D_h)^++(p_h-c_h)(D_h-y_h)^+\Big)\\
&+c_1(y_1-D_1)^+-c_1(D_1-y_1)^++c_1D_1-c_1x_1+o_1(y_1-D_1)^++p_1(D_1-y_1)^+-c_{H+1}(y_H-D_H)^+\\
&=\sum_{h=1}^H c_hD_h+ \sum_{h=1}^{H} \Big((o_h+c_h-c_{h+1})(y_h-D_h)^++(p_h-c_h)(D_h-y_h)^+\Big)-c_1x_1\\
\end{aligned}
\end{equation*}

Since $\sum_{h=1}^H c_hD_h$ and $-c_1x_1$ are fixed costs independent of our action, we can take them out of our consideration. Then we can effectively consider the cost of each stage $h$ is just $o'_h(y_h-D_h)^++p_h'(D_h-y_h)^+$, where $o'_h=o_h+c_h-c_{h+1}$ is the adjusted holding cost, and $p_h'=p_h-c_h$ is the adjusted lost-sales penalty. 
\end{proof}

\section{Comparison with Existing Q-Learning Algorithms}
\label{priorQ}
For \cite{jin2018q}, suppose we discretize the state and action space optimally with step-size $\epsilon_1$ to apply \cite{jin2018q} to the backlogged/lost-sales episodic inventory control problem with continuous action and state space. Then the $\operatorname{Regret}_{gap}$ we get is $\epsilon_1 T$. Applying the results of \cite{jin2018q}, their $\operatorname{Regret}_{MDP}$ is $\mathcal{O}(\sqrt{H^3SAT\iota})=\mathcal{O}(\sqrt{\frac{1}{\epsilon_1}\cdot \frac{1}{\epsilon_1} T\iota})$. To minimize $\operatorname{Regret}_{total}$, we balance the $\operatorname{Regret}_{MDP}$ and $\operatorname{Regret}_{gap}$ by setting $\sqrt{\frac{1}{\epsilon_1}\cdot \frac{1}{\epsilon_1} T}=\epsilon_1 T$, which gives $\epsilon_1=\frac{1}{T^{1/4}}$, giving us an optimized regret bound of $\mathcal{O}(T^\frac{3}{4}\sqrt{H^3\log T})$.

For \cite{1912.06366}, suppose we discretize the state and action space optimally with step-size $\epsilon_2$ to apply \cite{1912.06366} to the backlogged/lost-sales episodic inventory control problem. We also optimize aggregation using the special property of these inventory control problems that the Q-values only depend on the action not the state, so we aggregate all the state-action pairs $(x_1,y), (x_2,y)$ into one aggregated state-action pair. This $0$-error aggregation helps reduce the aggregated state-action space. Then the optimized regret bound in \cite{1912.06366} is $\mathcal{O}(\sqrt{H^4\frac{1}{\epsilon}T\log T}+\epsilon T)$. We minimize $\operatorname{Regret}_{total}$ by balancing the two terms and take $\epsilon=\frac{1}{T^{1/3}}$, obtaining an optimized regret bound of $\mathcal{O}(T^\frac{2}{3}\sqrt{H^4\log T})$.

\section{Missing Proofs for \emph{FQL} }
\label{FQLappendix}

For the proof for \emph{FQL}, we adopt similar notations and flow of the proof in \cite{jin2018q} (but adapted to our full-feedback setting) to facilitate quick comprehension for readers who are familiar with \cite{jin2018q}.

Like \cite{jin2018q}, we use $\left[\mathbb{P}_{h} V_{h+1}\right](x, y):=\mathbb{E}_{x^{\prime} \sim \mathbb{P}(\cdot | x, y)} V_{h+1}\left(x^{\prime}\right)$. Then the Bellman optimality equation becomes $Q_{h}^{*}(x, y)=\left(r_{h}+\mathbb{P}_{h} V_{h+1}^{*}\right)(x,y)$.

Similar to Equation \ref{updateHQL} but without ``skipping'', \emph{FQL}  updates the $Q$ values in the following way for any $(x,y)\in\mathcal{A}$ at any time step:
\begin{equation}
    Q_h^{k+1}(x,y)\leftarrow (1-\alpha_{k})Q_h^{k}(x,y)+\alpha_{k}[r_h^{k+1}(x,y)+ V_{h+1}^{k}(x_{h+1})]
\end{equation}

Then by the definition of weights $\alpha_t^k$, we have
\begin{equation}
    Q_{h}^{k}(x,y)=\alpha_{k-1}^{0} H+\sum_{j=1}^{k-1} \alpha_{k-1}^{j}\left[r_{h}^j(x,y)+V_{h+1}^{j}\left(x_{h+1}^{j}\right)\right]
    \label{Qupdate}
\end{equation}

The following two lemmas are variations of Lemma \ref{Qdiff} and Lemma \ref{martingale}.

\begin{lemma}
For any $(x,y, h, k)\in \mathcal{S}\times\mathcal{A}\times [H]\times [K]$, we have
\begin{equation*}
\begin{aligned}
    \left(Q_{h}^{k}-Q_{h}^{*}\right)(x,y)&=\alpha_{k-1}^{0}\left(H-Q_{h}^{*}(x,y)\right)\\
    &+\sum_{i=1}^{k-1} \alpha_{k-1}^{i}\left[\left(V_{h+1}^{i}-V_{h+1}^{*}\right)\left(x_{h+1}^{i}\right)+{r}^i_h-\mathbb{E}[{r}^i_h]+\left[\left(\hat{\mathbb{P}}_{h}^{i}-\mathbb{P}_{h}\right) V_{h+1}^{*}\right](x,y)\right]
\end{aligned}
\end{equation*}
\label{bigFQLQdiff}
\end{lemma}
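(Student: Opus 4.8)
The plan is to mirror the algebraic derivation of Lemma \ref{Qdiff}, now in the full-feedback setting where no ``skipping'' occurs so that every update advances exactly one stage and $\tau_h^i=h+1$. I would start from the two expressions already available: the unrolled update (Equation \ref{Qupdate}), $Q_{h}^{k}(x,y)=\alpha_{k-1}^{0} H+\sum_{i=1}^{k-1} \alpha_{k-1}^{i}[r_{h}^i(x,y)+V_{h+1}^{i}(x_{h+1}^{i})]$, and the Bellman optimality equation $Q_{h}^{*}(x,y)=(r_{h}+\mathbb{P}_{h} V_{h+1}^{*})(x,y)$.

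The first step is to write $Q_h^*(x,y)$ as the same convex combination of the weights. Using Property 1 of Lemma \ref{weights}, which gives $\alpha_{k-1}^0+\sum_{i=1}^{k-1}\alpha_{k-1}^i=1$ (the $\alpha_{k-1}^0$ term vanishing for $k\ge 2$ but retained to cover $k=1$), I would expand $Q_h^*(x,y)=\alpha_{k-1}^0 Q_h^*(x,y)+\sum_{i=1}^{k-1}\alpha_{k-1}^i Q_h^*(x,y)$ and replace each $Q_h^*(x,y)$ inside the summation by its Bellman form $\mathbb{E}[r_h^i]+[\mathbb{P}_h V_{h+1}^*](x,y)$. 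Subtracting this from Equation \ref{Qupdate} term by term then produces $\alpha_{k-1}^{0}(H-Q_h^*(x,y))$ plus a sum of $\alpha_{k-1}^{i}\big[r_h^i+V_{h+1}^i(x_{h+1}^i)-\mathbb{E}[r_h^i]-[\mathbb{P}_h V_{h+1}^*](x,y)\big]$.

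The second step is the telescoping trick that isolates the three interpretable pieces. Inside each summand I would add and subtract $V_{h+1}^*(x_{h+1}^i)$, grouping $V_{h+1}^i(x_{h+1}^i)-V_{h+1}^*(x_{h+1}^i)=(V_{h+1}^i-V_{h+1}^*)(x_{h+1}^i)$ as the recursion term and keeping $r_h^i-\mathbb{E}[r_h^i]$ as the reward-noise term. The remaining piece $V_{h+1}^*(x_{h+1}^i)-[\mathbb{P}_h V_{h+1}^*](x,y)$ I would rewrite as $[(\hat{\mathbb{P}}_h^i-\mathbb{P}_h)V_{h+1}^*](x,y)$, where $\hat{\mathbb{P}}_h^i$ is the one-sample empirical transition operator $[\hat{\mathbb{P}}_h^i V](x,y):=V(x_{h+1}^i(x,y))$ induced by the realized randomness $\tilde D_h^i$. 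Collecting the three terms yields the claimed identity.

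I do not anticipate a genuine obstacle, since the statement is a bookkeeping rearrangement; the only care needed is in fixing the definition of $\hat{\mathbb{P}}_h^i$ and in confirming that, unlike the one-sided-feedback case, the full-feedback update applies to \emph{every} $(x,y)$, so that $x_{h+1}^i$ is well defined as the simulated next state from $(x,y)$ under $\tilde D_h^i$ rather than along the executed trajectory. The purpose of this lemma is purely to expose the martingale-difference-plus-transition-noise structure, so I would state it with exactly this decomposition and defer the Azuma--Hoeffding concentration to the FQL analogue of Lemma \ref{martingale}.
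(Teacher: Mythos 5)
Your proposal is correct and follows essentially the same route as the paper's proof: both expand $Q_h^*(x,y)$ as the convex combination $\alpha_{k-1}^0 Q_h^*+\sum_i \alpha_{k-1}^i\bigl[\mathbb{E}[r_h^i]+(\mathbb{P}_h-\hat{\mathbb{P}}_h^i)V_{h+1}^*+V_{h+1}^*(x_{h+1}^i)\bigr]$ via the Bellman equation and $\sum_{i=0}^{k-1}\alpha_{k-1}^i=1$, then subtract the unrolled update (Equation \ref{Qupdate}); your explicit add-and-subtract of $V_{h+1}^*(x_{h+1}^i)$ is exactly the step the paper performs implicitly by inserting the operator $\hat{\mathbb{P}}_h^i$. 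Your remark that $x_{h+1}^i$ denotes the simulated next state from $(x,y)$ under $\tilde D_h^i$ (available for every pair in the full-feedback setting) matches the paper's definition $[\hat{\mathbb{P}}_{h}^{i} V_{h+1}](x, y):=V_{h+1}(x_{h+1}^{i})$.
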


\begin{proof}
From the Bellman optimality equation $Q_{h}^{*}(x,y)=\mathbb{E}[r_{h}(x,y)]+\mathbb{P}_{h} V_{h+1}^{*}(x,y)$, our notation $\left[\hat{\mathbb{P}}_{h}^{i} V_{h+1}\right](x, y):=V_{h+1}\left(x_{h+1}^{i}\right)$, and the fact that $ \sum_{i=0}^{k-1} \alpha_{k-1}^{i}=1$, we have
\[Q_{h}^{*}(x,y)=\alpha_{k-1}^{0} Q_{h}^{*}(x,y)+\sum_{i=1}^{k-1} \alpha_{k-1}^{i}\left[\mathbb{E}[r_{h}^{i}(x, y)]+\left(\mathbb{P}_{h}-\hat{\mathbb{P}}_{h}^{i}\right) V_{h+1}^{*}(x, y)+V_{h+1}^{*}\left(x_{h+1}^{i}\right)\right]\]

Subtracting Equation \ref{Qupdate} from this equation gives us Lemma \ref{bigFQLQdiff}.
\end{proof}

\begin{lemma}
For any $p\in(0,1)$, with probability at least $1-p$, for any $(x, y, h, k)\in \mathcal{S}\times\mathcal{A}\times [H]\times [K]$, let $\iota=\log(SAT/p)$, we have for some absolute constant $c$:
\begin{equation}
    0 \leq\left(Q_{h}^{k}-Q_{h}^{*}\right)(x, y) \leq \alpha_{k-1}^{0} H+\sum_{i=1}^{k-1} \alpha_{k-1}^{i}\left(V_{h+1}^{i}-V_{h+1}^{*}\right)\left(x_{h+1}^{i}\right)+c \sqrt{\frac{H^{3} \iota}{k-1}}
\end{equation}
\label{bigFQLMartingale}
\end{lemma}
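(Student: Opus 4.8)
The plan is to establish Lemma~\ref{bigFQLMartingale} by taking the exact decomposition of $\left(Q_h^k - Q_h^*\right)(x,y)$ furnished by Lemma~\ref{bigFQLQdiff} and controlling each of its pieces. The decomposition has three types of terms: the initialization term $\alpha_{k-1}^0(H - Q_h^*(x,y))$, the recursive value-difference term $\sum_i \alpha_{k-1}^i (V_{h+1}^i - V_{h+1}^*)(x_{h+1}^i)$, and two noise terms, namely the reward fluctuation $r_h^i - \mathbb{E}[r_h^i]$ and the empirical-transition fluctuation $[(\hat{\mathbb{P}}_h^i - \mathbb{P}_h)V_{h+1}^*](x,y)$. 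The value-difference term is passed through unchanged to the right-hand side, so the real work is to bound the initialization term and the two noise sums, and to verify the lower bound $0 \le (Q_h^k - Q_h^*)(x,y)$.

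For the upper bound, I would first note that $H - Q_h^*(x,y) \le H$ since $Q_h^* \ge 0$, so the initialization term is at most $\alpha_{k-1}^0 H$, matching the claimed form. The two noise terms are each weighted martingale difference sequences with respect to the filtration $\{\mathcal{F}_h^i\}$: conditioned on the past, both $r_h^i - \mathbb{E}[r_h^i]$ and $[(\hat{\mathbb{P}}_h^i - \mathbb{P}_h)V_{h+1}^*](x,y)$ have zero mean, and each increment is bounded by $\mathcal{O}(H)$ because rewards lie in $[0,1]$ and $V_{h+1}^* \in [0,H]$. Applying Azuma--Hoeffding to each weighted sum and using Property~2 of Lemma~\ref{weights}, namely $\sum_{i=1}^{k-1}(\alpha_{k-1}^i)^2 \le \frac{2H}{k-1}$, yields a bound of order $\sqrt{H^2 \cdot \frac{2H}{k-1}\,\iota} = \mathcal{O}\!\left(\sqrt{\frac{H^3\iota}{k-1}}\right)$ with probability at least $1-p$ after a union bound over all $(x,y,h,k)$, which is where the logarithmic factor $\iota = \log(SAT/p)$ enters. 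Collecting these gives the stated upper bound with an absolute constant $c$.

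For the lower bound $0 \le (Q_h^k - Q_h^*)(x,y)$, the natural route is induction on $k$ (with an inner backward induction on $h$, using $V_{H+1} \equiv 0$ as the base case). Assuming $Q_{h+1}^j \ge Q_{h+1}^*$ for all relevant $j < k$ gives $V_{h+1}^j \ge V_{h+1}^*$ pointwise, so the recursive value-difference term in the decomposition is nonnegative; combined with the high-probability control of the noise terms on the same event, the full difference stays nonnegative. This mirrors the argument in \cite{jin2018q} and is essentially bookkeeping once the martingale event is fixed.

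The main obstacle I expect is handling the two-sidedness of the bound cleanly under a single high-probability event. The upper bound needs the noise sums controlled in absolute value, whereas the lower bound needs the same event plus the inductive nonnegativity of the value gaps; care is required so that the union bound is taken once over all indices and both directions, rather than incurring separate failure probabilities that would degrade the constant. A secondary subtlety is that the transition-noise term $[(\hat{\mathbb{P}}_h^i - \mathbb{P}_h)V_{h+1}^*](x,y)$ uses the \emph{optimal} $V_{h+1}^*$ (which is deterministic and non-adaptive) rather than the running estimate $V_{h+1}^i$; this is precisely what makes it a genuine martingale difference sequence, and conflating it with $V_{h+1}^i$ would break the Azuma--Hoeffding application. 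Keeping that distinction explicit is the key technical point.
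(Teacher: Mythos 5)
Your proposal matches the paper's own proof essentially step for step: the same decomposition from Lemma \ref{bigFQLQdiff}, the same treatment of the combined reward/transition noise as a single martingale difference sequence bounded via Azuma--Hoeffding together with Property 2 of Lemma \ref{weights}, the same union bound over $(x,y,h,k)$ that produces $\iota=\log(SAT/p)$, and the same induction on $h=H,\dots,1$ for the left-hand inequality. The only caveat is that your lower-bound argument is left at exactly the paper's level of rigor: since the FQL update (unlike \cite{jin2018q}) adds no exploration bonus, the step from ``value gaps are nonnegative by induction plus the noise is controlled'' to $Q_h^k\ge Q_h^*$ is asserted rather than derived --- both in your write-up (where you call it ``essentially bookkeeping'') and in the paper's one-sentence treatment.
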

\begin{proof}
For any $i\in[k]$, recall that episode $i$ is the episode where the state-action pair $(x,y)$ was updated at stage $h$ for the $i$th time. Let $\mathcal{F}_h^i$ be the $\sigma$-field generated by all the random variables until episode $i$, stage $h$. Then for any $\tau\in[K]$, $\left([(\hat{\mathbb{P}}_{h}^{i}-\mathbb{P}_{h}) V_{h+1}^{*}](x, y)+{r}^i_h-\mathbb{E}[{r}^i_h]\right)_{i=1}^{\tau}$ is a martingale difference sequence w.r.t. the filtration $\left\{\mathcal{F}_h^{i}\right\}_{i \geq 0}$. Then by Azuma-Hoeffding Theorem, we have that with probability at least $1-p/SAT$:
\begin{equation}
\quad\left|\sum_{i=1}^{k-1} \alpha_{k}^{i}\cdot\left[\left(\hat{\mathbb{P}}_{h}^{i}-\mathbb{P}_{h}\right) V_{h+1}^{*}\right](x, y)+{r}^i_h-\mathbb{E}[{r}^i_h]\right| \leq \frac{c H}{2} \sqrt{\sum_{i=1}^{k-1}\left(\alpha_{k-1}^{i}\right)^{2} \cdot \iota} \leq c \sqrt{\frac{H^{3} \iota}{k-1}}
    \label{azumaFQL}
\end{equation}
for some constant $c$. 

Now we union bound over states, actions and times, we see that with probability at least $1-p$, we have 
\begin{equation}
    \left|\sum_{i=1}^{k-1} \alpha_{k=1}^{i}\left[\left(\hat{\mathbb{P}}_{h}^{k_{i}}-\mathbb{P}_{h}\right) V_{h+1}^{*}\right](x, y)+{r}^i_h-\mathbb{E}[{r}^i_h]\right| \leq c \sqrt{\frac{H^{3} \iota}{k-1}} 
    \label{unionboundFQL}
\end{equation}

Then the right-hand side of Lemma \ref{bigFQLMartingale} follows from Lemma \ref{bigFQLQdiff} and Inequality \ref{unionboundFQL}. The left-hand side also follows from Lemma \ref{bigFQLQdiff} and Inequality \ref{unionboundFQL} using induction on $h=H, H-1, \dots, 1$.
\end{proof}

\begin{proof}{\bf (Theorem \ref{HQLresult})}
Define $\Delta_{h}^{k}:=\left(V_{h}^{k}-V_{h}^{\pi_{k}}\right)\left(x_{h}^{k}\right)$ and $\phi_{h}^{k}:=\left(V_{h}^{k}-V_{h}^{*}\right)\left(x_{h}^{k}\right)$.

By Lemma \ref{azumaFQL}, with $1-p$ probability, $Q_{h}^{k} \geq Q_{h}^{*}$ and thus $V_{h}^{k} \geq V_{h}^{*}$. Thus the total regret can be upper bounded: 
\[\operatorname{Regret}(K)=\sum_{k=1}^{K}\Big(V_{1}^{*}-V_{1}^{\pi_{k}}\Big)(x_{1}^{k}) \leq \sum_{k=1}^{K}\left(V_{1}^{k}-V_{1}^{\pi_{k}}\right)(x_{1}^{k})=\sum_{k=1}^{K} \Delta_{1}^{k}\]

The main idea of the rest of the proof is to upper bound $\sum_{k=1}^{K} \Delta_{h}^{k}$ by the next step $\sum_{k=1}^{K} \Delta_{h+1}^{k}$, which gives a recursive formula to obtain the total regret. Let $y_h^k$ denote the base stock levels taken at stage $h$ of episode $k$, which means $y_h^k=\argmax Q_h^k(y')$.
\begin{equation}
    \begin{aligned}
\Delta_{h}^{k} =&\left(V_{h}^{k}-V_{h}^{\pi_{k}}\right)(x_{h}^{k}) \stackrel{(1)}{\le }\left(Q_{h}^{k}-Q_{h}^{\pi_{k}}\right)( x_{h}^{k}, y_{h}^{k}) \\
=&\left(Q_{h}^{k}-Q_{h}^{*}\right)(x_{h}^{k}, y_{h}^{k})+\left(Q_{h}^{*}-Q_{h}^{\pi_{k}}\right)(x_{h}^{k}, y_{h}^{k}) \\
 \stackrel{(2)}{\le }&\alpha_{k-1}^{0} H+\sum_{i=1}^{k-1} \alpha_{k-1}^{i} \phi_{h+1}^{i}+c \sqrt{\frac{H^{3} \iota}{k-1}}+\left[\mathbb{P}_{h}\left(V_{h+1}^{*}-V_{h+1}^{\pi_{k}}\right)\right](x_{h}^{k}, y_{h}^{k}) \\
 =&\alpha_{k-1}^{0} H+\sum_{i=1}^{k-1} \alpha_{k-1}^{i} \phi_{h+1}^{i}+c \sqrt{\frac{H^{3} \iota}{k-1}}+\left[\left(\mathbb{P}_{h}-\hat{\mathbb{P}}_{h}^{k}\right)\left(V_{h+1}^{*}-V_{h+1}^{\pi_k}\right)\right](x_{h}^{k}, y_{h}^{k})\\
&+(V_{h+1}^*-V_{h+1}^{\pi_k})(x_{h+1}^k)\\
 \stackrel{(3)}{=} &\alpha_{k-1}^{0} H+\sum_{i=1}^{k-1} \alpha_{k-1}^{i} \phi_{h+1}^{i}+c \sqrt{\frac{H^{3} \iota}{k-1}}-\phi_{h+1}^{k}+\Delta_{h+1}^{k}+\xi_{h+1}^{k}
\end{aligned}
\label{delta_FQL}
\end{equation}
where $\xi_{h+1}^{k}:=\left[\left(\mathbb{P}_{h}-\hat{\mathbb{P}}_{h}^{k}\right)\left(V_{h+1}^{*}-V_{h+1}^{\pi_k}\right)\right]\left(x_{h}^{k}, y_{h}^{k}\right)$ is a martingale difference sequence. Inequality (1) holds because $V_{h}^{k}\left(x_{h}^{k}\right) \leq \max_{\text{feasible } y' \text{ given }x} Q_{h}^{k}\left(x_{h}^{k}, y^{\prime}\right)=Q_{h}^{k}\left(x_{h}^{k}, y_{h}^{k}\right)$, and Inequality (2) holds by Lemma \ref{bigFQLMartingale} and the Bellman equations. Inequality (3) holds by definition $\Delta_{h+1}^{k}-\phi_{h+1}^{k}=\left(V_{h+1}^{*}-V_{h+1}^{\pi_{k}}\right)\left(x_{h+1}^{k}\right)$.

In order to compute $\sum_{k=1}^{K} \Delta_{1}^{k}$, we need to first bound the first term in Equation \ref{delta_FQL}. Since $\alpha_{k}^0=0, \forall k\ge 1$, we know that $\sum_{k=1}^{K} \alpha_{k-1}^{0} H \le H$.

Now we bound the sum of the second term in Equation \ref{delta} over the episodes by regrouping:
\begin{equation}
    \sum_{k=2}^{K} \sum_{i=1}^{k-1} \alpha_{k-1}^{i} \phi_{h+1}^{i} \leq \sum_{i=1}^{K-1} \phi_{h+1}^{i} \sum_{k=i+1}^{\infty} \alpha_{k-1}^{i}
    \leq \sum_{i=1}^{K-1} \phi_{h+1}^{i} \sum_{k'=i}^{\infty} \alpha_{k'}^{i}
    \leq\left(1+\frac{1}{H}\right) \sum_{k=1}^{K} \phi_{h+1}^{k}
    \label{regroup}
\end{equation}
where the last inequality uses $\sum_{t=i}^{\infty} \alpha_{t}^{i}=1+\frac{1}{H} \text { for every } i \geq 1$ from Lemma \ref{weights}.

Plugging the above Equation \ref{regroup} and $\sum_{k=1}^{K} \alpha_{k}^{0} H \le H$ back into Equation \ref{delta}, we have:
\begin{equation}
\begin{aligned}
\sum_{k=1}^{K} \Delta_{h}^{k} &\le H+\sum_{k=2}^{K} \Delta_{h}^{k} \\
& \leq H+ H+\left(1+\frac{1}{H}\right) \sum_{k=1}^{K} \phi_{h+1}^{k}-\sum_{k=2}^{K} \phi_{h+1}^{k}+\sum_{k=2}^{K} \Delta_{h+1}^{k}+\sum_{k=2}^{K}c \sqrt{\frac{H^{3} \iota}{k-1}}+\sum_{k=2}^{K}\xi_{h+1}^{k} \\
& \le 2H+\phi_{h+1}^{1}+\frac{1}{H}\sum_{k=2}^{K} \phi_{h+1}^{k}+\sum_{k=2}^{K} \Delta_{h+1}^{k}+\sum_{k=2}^{K}c \sqrt{\frac{H^{3} \iota}{k-1}}+\sum_{k=2}^{K}\xi_{h+1}^{k} \\
& \leq 3H+\left(1+\frac{1}{H}\right) \sum_{k=2}^{K} \Delta_{h+1}^{k}+\sum_{k=2}^{K}c \sqrt{\frac{H^{3} \iota}{k-1}}+\sum_{k=}^{K}\xi_{h+1}^{k} 
\end{aligned}
\end{equation}
where the last inequality uses $\phi_{h+1}^k\le\Delta_{h+1}^k$. By recursing on $h=1,2, \dots, H$, and because $\Delta_{H+1}^K=0$:
\begin{equation*}\sum_{k=1}^{K} \Delta_{1}^{k} \leq \mathcal{O}\left(\sum_{h=1}^{H} \sum_{k=1}^{K}\big(c \sqrt{\frac{H^{3} \iota}{k-1}}+\xi_{h+1}^{k}\big)\right)\end{equation*}
where $\sum_{h=1}^H\sum_{k=1}^{K}c \sqrt{\frac{H^{3} \iota}{k-1}}= \mathcal{O}(H\sqrt{H^3\log(SAT/p)}\sqrt{ K})=\tilde{\mathcal{O}}(\sqrt{H^4T})$.

On the other hand, by Azuma-Hoeffding inequality, with probability $1-p$, we have 
\begin{equation}
\left|\sum_{h=1}^{H} \sum_{k=1}^{K} \xi_{h+1}^{k}\right|=\left|\sum_{h=1}^{H} \sum_{k=1}^{K}\left[\left(\mathbb{P}_{h}-\hat{\mathbb{P}}_{h}^{k}\right)\left(V_{h+1}^{*}-V_{h+1}^{\pi_k}\right)\right]\left(x_{h}^{k}, y_{h}^{k}\right)\right| \leq c H \sqrt{T_{l}} \hspace{0.2cm}\le \tilde{\mathcal{O}}(\sqrt{H^4T})
\end{equation}
which establishes $\sum_{k=1}^{K} \Delta_{1}^{k} \leq \tilde{\mathcal{O}}(H^2\sqrt{T})$. 
\end{proof}

\section{More Numerical Experiments}
\label{numericappend}

We show more numerical experiment results to demonstrate the performance of \emph{FQL} and \emph{HQL}. In Table \ref{alltable2}, we use again the backlogged model to compare \emph{FQL} and \emph{HQL} against \emph{OPT}, \emph{Aggregated QL} and \emph{QL-UCB}, but with a different set of parameter than in Section \ref{numerical}. In Table \ref{lostsalestable} and \ref{lostsalestable2}, we use the lost-sales model to compare \emph{HQL} against \emph{OPT}, \emph{Aggregated QL} and \emph{QL-UCB}.

For Tables \ref{alltable2} and \ref{lostsalestable}, we make the demand distribution less adversarial: with each step in the episode, we have demands that are increasing in expectation. However, we let the upper bound of base-stock levels increase with the episode length $H$, which is more adversarial. For Table \ref{lostsalestable2}, we use the same demand distributions and base-stock upper bound as in Table \ref{alltable} in Section \ref{numerical}. 

We run each experimental point $300$ times for statistical significance.

\noindent\begin{minipage}{.5\linewidth}
 {\bf Episode length}: $H=1, 3, 5$.
 
 {\bf Number of episodes}: $K=100,500,2000$.
 
{\bf Demands}: $D_h\sim U[0,1]+h$.

\end{minipage}%
\begin{minipage}{.5\linewidth}
 {\bf Holding cost}: $o_h=2$.
 
 {\bf Backlogging cost}: $b_h=10$.
 
{\bf Action space}: $[0, \frac{1}{20}, \frac{2}{20}, \dots, 2H]$.
\end{minipage}


\vspace{-0.4cm}

\begin{center}
\begin{table}[h]
 \caption{Comparison of cumulative costs for backlogged episodic inventory control with less adversarial demands and increasing base-stock upper bounds }

 \begin{tabular}{p{0.2cm}p{0.8cm}p{0.8cm}p{0.6cm}p{0.8cm}p{0.6cm}p{0.8cm}p{0.7cm}p{1cm}p{0.9cm}p{1cm}p{0.6cm}} 
\multicolumn{2}{c}{} & \multicolumn{2}{c}{ \emph{OPT} } &  \multicolumn{2}{c}{\emph{FQL} } & \multicolumn{2}{c}{\emph{HQL} } & \multicolumn{2}{c}{ \emph{Aggregated QL}} & \multicolumn{2}{c}{ \emph{QL-UCB}}\\  
{\bf H} & {\bf K }  & {\bf mean} & {\bf SD} &  {\bf mean} & {\bf SD} &  {\bf mean} & {\bf SD}  &  {\bf mean} & {\bf SD} &  {\bf mean} & {\bf SD} \\   
 \hline
 \multirow{3}{*}{1} & $100 $ & 89.1 & 3.8& 97.1 & 5.5  & 117.3 & 16.8 & 160.1 & 8.3 & 327.5 &18.8\\ 
  & $500 $&  420.2   & 4.2 & 431.2 & 4.2  & 507.8 & 45.6 & 732.7 & 22.1 & 825.4 & 10.9 \\ 
 
 & $2000 $&  1669.8  & 4.8 &  1691.2 & 6.6  & 1883.6 & 99.7  & 2546.2& 32.6& 2952.1 & 19.9\\ 
 \hline

 \multirow{3}{*}{3} & $100$ & 253.0 & 6.6  & 304.6 & 9.6  & 423.8 & 15.4 & 510.9& 14.4& 1712.0 & 19.1\\
 
 & $500$& 1252.4 & 7.0 & 1314.3 & 11.9  & 1611.0 & 43.9  & 1703.2 & 16.1 & 4603.7 & 101.6 \\
 & $2000$ & 5056.2 & 6.5  & 5128.7 & 10.2 & 5702.8 & 104.7 & 6188.0  & 14.1  & 15088.6  & 132.0\\
 \hline
 \multirow{3}{*}{5} & $ 100$ & 415.9 & 6.4 & 543.6 & 11.0  & 762.4 & 30.0 & 3011.8 & 1294.6 & 6101.9 & 357.6 \\ 
& $500$  &  2077.1 & 12.7  & 2224.6 & 15.6  &  2746.3  &113.7 & 10277.1 & 6888.5 & 11763.6 & 2982.5  \\ 
& $ 2000$ &  8394.3 & 6.2 & 8557.2 & 11.1 &  9630.4 &  356.6 & 30489.8 & 31232.4 & 39873.8 & 7210.1\\ 
\bottomrule
\end{tabular}
 \label{alltable2}
\end{table}
\end{center}
\vspace{-0.6cm}

Again for \emph{Aggregated QL} from \cite{1912.06366} and for \emph{QL-UCB} from \cite{jin2018q}, we optimize by taking the Q-values to be only dependent on the action, thus reducing the state-action pair space. As in Section \ref{numerical}, we do not fine-tune the confidence interval for \emph{HQL} for different settings, but use a general formula $\sqrt{\frac{H\log(HKA)}{k}}$ as the confidence interval for all settings. We also do not fine-tune the \emph{UCB bonus} defined in \emph{QL-UCB} (see \cite{jin2018q}).

A caveat of \emph{Aggregated QL} from \cite{1912.06366} is that we need to know a good aggregation of the state-action pairs beforehand, which is usually unavailable for online problems. For using \emph{Aggregated QL} in Table \ref{alltable2} and \ref{lostsalestable}, we further aggregate the state and actions to be multiples of $1/2$. For using \emph{Aggregated QL} in Table \ref{lostsalestable2} (and also in Section \ref{numerical}), we further aggregate the state and actions to be multiples of $1$.

\noindent\begin{minipage}{.5\linewidth}
 {\bf Episode length}: $H=1, 3, 5$.
 
 {\bf Number of episodes}: $K=100,500,2000$.
 
{\bf Demands}: $D_h\sim U[0,1]+h$.

\end{minipage}%
\begin{minipage}{.5\linewidth}
 {\bf Holding cost}: $o_h=2$.
 
{\bf Lost-Sales Penalty}: $b_h=10$.
 
{\bf Action space}: $[0, \frac{1}{20}, \frac{2}{20}, \dots, 2H]$.
\end{minipage}

\vspace{-0.4cm}

\begin{center}
\begin{table}[h]
 \caption{Comparison of cumulative costs for lost-sales episodic inventory control with less adversarial demands and increasing base-stock upper bounds}
 \begin{tabular}{p{0.2cm}p{0.8cm}p{1cm}p{0.8cm}p{1cm}p{1.2cm}p{1cm}p{1cm}p{1cm}p{1cm}} 
\multicolumn{2}{c}{} & \multicolumn{2}{c}{ \emph{OPT} } & \multicolumn{2}{c}{\emph{HQL} } & \multicolumn{2}{c}{ \emph{Aggregated QL}} & \multicolumn{2}{c}{ \emph{QL-UCB}}\\  
{\bf H} & {\bf K }  &  {\bf mean} & {\bf SD} &  {\bf mean} & {\bf SD}  &  {\bf mean} & {\bf SD} &  {\bf mean} & {\bf SD} \\   
 \hline

 \multirow{3}{*}{1} & $100 $ &89.1 & 3.8 & 117.3 & 16.8   & 201.7 & 6.6 & 291.7 & 6.6\\ 
  & $500 $&   420.2 & 4.2  & 507.8 & 44.6 & 1002.8 & 4.0 & 1452.8 & 4.0 \\ 
 
 & $2000 $&  1669.8  & 4.8  & 1883.6 & 99.7  & 4012.1& 5.3 & 5812.1 & 5.3\\ 
 \hline

 \multirow{3}{*}{3} & $100$ & 253.0 & 6.6  & 443.8 & 65.9  & 1902.8 & 81.4 & 2071.4 & 29.9\\
 
 & $500$& 1252.4 & 7.0 & 1730.7 & 361.3   & 9534.0 & 379.7 & 10375.7 & 13.1 \\
 & $2000$ & 5056.2 & 6.5  & 6163.2 & 374.3 & 38139.6  & 1519.4  & 41504.9  & 22.6\\
 \hline
 \multirow{3}{*}{5} & $ 100$ & 415.9 & 6.4 & 780.6& 64.3 & 5716.6 & 153.0 & 5902.8 & 44.8 \\ 
& $500$  &  2077.1 & 12.7  & 2926.0 & 332.6   & 28510.7 & 764.3 & 29385.1 & 183.1  \\ 
& $ 2000$ &  8394.3 & 6.2 & 10560.1 & 1201.6 & 114010.7 & 3080.2 & 117481.6 & 727.6\\ 
\bottomrule
\end{tabular}
 \label{lostsalestable}
\end{table}
\end{center}

\vspace{2cm}
\newpage
\noindent\begin{minipage}{.5\linewidth}
 {\bf Episode length}: $H=1, 3, 5$.
 
 {\bf Number of episodes}: $K=100,500,2000$.
 
{\bf Demands}: $D_h\sim (10-h)/2+U[0,1]$.
\end{minipage}%
\begin{minipage}{.5\linewidth}
 {\bf Holding cost}: $o_h=2$.
{\bf Lost-Sales Penalty}: $b_h=10$.
{\bf Action Space}: $[0, \frac{1}{20}, \frac{2}{20}, \dots, 10]$.
\end{minipage}

\vspace{-0.2cm}

\begin{center}
\begin{table}[h]
 \caption{Comparison of cumulative costs for lost-sales episodic inventory control with the original demands and base-stock upper bounds}
 \begin{tabular}{p{0.2cm}p{0.8cm}p{1cm}p{0.8cm}p{1cm}p{1.2cm}p{1cm}p{1cm}p{1cm}p{1cm}} 
\multicolumn{2}{c}{} & \multicolumn{2}{c}{ \emph{OPT} } & \multicolumn{2}{c}{\emph{HQL} } & \multicolumn{2}{c}{ \emph{Aggregated QL}} & \multicolumn{2}{c}{ \emph{QL-UCB}}\\  
{\bf H} & {\bf K } & {\bf mean} & {\bf SD} &  {\bf mean} & {\bf SD}  &  {\bf mean} & {\bf SD} &  {\bf mean} & {\bf SD} \\  
 \hline

 \multirow{3}{*}{1} & $100 $ & 88.2 & 4.1 & 125.9 & 19.2   &  705.4 &  9.7 & 895.4 & 9.7\\ 
  & $500 $&   437 & 4.4 & 528.9 & 44.1 & 3506.1 & 4.4 & 4456.1 & 4.4 \\ 
 
 & $2000 $&  1688.9  & 2.8  & 1929.2 & 89.1  & 14005.6 & 6.6 & 17805.6 & 6.6\\ 
 \hline

 \multirow{3}{*}{3} & $100$ & 257.4 & 3.2  & 448.4 & 52.1  & 2405.6 & 9.1 & 2975.6 & 9.1\\
 
 & $500$& 1274.6 & 6.1 & 1746.7 & 239.9   &  12009.3 & 6.4 & 14859.3 & 6.4 \\
 & $2000$ & 4965.6 & 8.3  & 6111.2 & 918.2 & 47926.4  & 14.8  & 59326.4  & 14.8\\
 \hline
 \multirow{3}{*}{5} & $ 100$ & 421.2 & 3.3 & 774.6 & 51.8 & 4497.4 & 11.6 & 5447.4 & 11.6 \\ 
& $500$  &  2079.0 & 8.2  & 2973.9 & 299.9   & 22478.5 & 10.7 & 27228.5 & 10.7  \\ 
& $ 2000$ &  8285.7 & 8.3 & 10701.1 & 1207.5 & 89929.7 & 14.0 & 108929.7 & 14.0\\ 
\bottomrule
\end{tabular}
 \label{lostsalestable2}
\end{table}
\end{center}

As we can see in all of our experiments, \emph{FQL} and \emph{HQL} both perform very promisingly with significant advantage over the other two existing algorithms. \emph{FQL} stays consistently very close to the clairvoyant optimal in both the more adversarial and less adversarial settings for the backlogged model. \emph{HQL} catches up rather quickly to \emph{OPT} in all the settings for both the backlogged model and the lost-sales model.

\end{document}